\documentclass[conference]{IEEEtran}
\usepackage{times}

\usepackage[numbers]{natbib}
\usepackage{multicol}
\usepackage[bookmarks=true]{hyperref}
\usepackage{graphicx, mathtools}

\usepackage{amsmath,amssymb,amsfonts}
\usepackage{algorithm}
\usepackage{algpseudocode}

\usepackage{array}
\usepackage[caption=false,font=normalsize,labelfont=sf,textfont=sf]{subfig}
\usepackage{textcomp}
\usepackage{stfloats}
\usepackage{url}
\usepackage{verbatim}
\usepackage{graphicx}
\usepackage{mathtools}
\usepackage{float}
\usepackage{xcolor}
\usepackage{makecell}
\usepackage{booktabs}  
\usepackage{array, caption, capt-of, cuted}     
\usepackage[font=small,labelfont=bf]{caption}
\usepackage{amsthm}

\let\labelindent\relax
\usepackage{enumitem}

\usepackage{cancel}

\usepackage{svg}
\usepackage{tabularx}

\usepackage{multirow}
\usepackage{makecell}
\usepackage{siunitx}
\usepackage{xcolor}

\newtheorem{theorem}{Theorem}
\newtheorem{lemma}{Lemma}

\newtheorem{assumption}{Assumption}

\usepackage{courier}

\usepackage{eso-pic}

\IEEEoverridecommandlockouts

\begin{document}

\AddToShipoutPictureFG*{%
  \AtPageUpperLeft{%
    \hspace*{0.75in}%
    \raisebox{-0.5in}{%
      {\fontfamily{phv}\fontsize{10}{12}\selectfont\bfseries
      \begin{tabular}{@{}l@{}}
      Robotics: Science and Systems 2026\\
      Sydney, Australia, July 13-July 17, 2026
      \end{tabular}}%
    }%
  }%
}

\title{Distributionally Robust Control via Stein Variational Inference for Contact-Rich Manipulation}

\author{
\authorblockN{Hrishikesh Sathyanarayan\authorrefmark{1},
Victor Vantilborgh\authorrefmark{2},
Harish Ravichandar\authorrefmark{3},
Tom Lefebvre\authorrefmark{2}, and
Ian Abraham\authorrefmark{1}\authorrefmark{4}}
\authorblockA{\authorrefmark{1}Yale University, \authorrefmark{2} Ghent University, \authorrefmark{3}Georgia Tech, \authorrefmark{4}University of Sydney}
\thanks{\authorrefmark{1}H.S and I.A are with the Department of Mechanical Engineering, Yale University, New Haven, CT, USA, email: hrishi.sathyanarayan@yale.edu}
\thanks{\authorrefmark{2}V.V and T.L are with the Department of Electromechanical, Systems and Metal Engineering, Ghent University, Belgium, email: \{ victor.vantilborgh, tom.lefebvre\}@ugent.be}
\thanks{\authorrefmark{3}H.R. is with the School of Interactive Computing, Georgia Institute of Technology, Atlanta, GA, USA, email: harish.ravichandar@cc.gatech.edu}
\thanks{\authorrefmark{4}I.A is with the Department of Electrical Engineering, University of Sydney, Australia, email: ian.abraham@sydney.edu.au}
}



%

\maketitle

\begin{abstract}
Reliable robotic manipulation requires control policies that can accurately represent and adapt to uncertainty arising from contact-rich interactions. 
Modern data-driven methods mitigate uncertainty through large-scale training and computation, and degrade significantly in performance with limited number of training samples. 
By contrast, classical model-based controllers are computationally efficient and reliable, but their limited ability to represent task-relevant uncertainty can hinder performance in contact-rich interactions. 

In this work, we propose to expand the capabilities of model-based manipulation control through more flexible uncertainty modeling that retains performance while exactly adapting to uncertainty. 
Our approach casts the manipulation problem as a distributionally robust control optimization and proposes a novel deterministic formulation based on Stein variational inference that preserves performance while explicitly modeling task-sensitive parameter uncertainty. 
As a result, the derived controllers are more aware of task sensitivities to uncertainty, yielding high reliability without compromising performance.
Experimental results demonstrate up to 3$\times$ improved robustness across a range of contact-rich manipulation tasks under broad parametric uncertainty, outperforming existing model-based control methods. Additional media and code is provided in \url{https://github.com/ialab-yale/stein-variational-dro.git}


\end{abstract}

\IEEEpeerreviewmaketitle

\section{Introduction}

   In-the-wild manipulation often requires reasoning in an environment filled with uncertainty.
    However, the reliability and success of robotic control for manipulation is contingent on the ability to appropriately exploit and adapt to uncertainty through interacting with their environment.
    Modern control approaches mitigate uncertainty through vast accumulation of data and compute  \cite{fiedler2021learningenhancedrobustcontrollersynthesis,trilbmteam2025carefulexaminationlargebehavior,chi2024diffusionpolicyvisuomotorpolicy,bommasani2022opportunitiesrisksfoundationmodels,embodimentcollaboration2025openxembodimentroboticlearning}, effectively aiming to cover all possible scenarios at the expense of precision and interpretability, and their performance rapidly degrades with a reducing number of samples \cite{levine2016endtoendtrainingdeepvisuomotor,dulacarnold2021empiricalinvestigationchallengesrealworld}. 
    The unifying goal is to develop zero-shot manipulation controllers that actively adapt to physical uncertainty, without the loss of performance and robustness.

    \begin{figure}
        \centering
        \includegraphics[width=\linewidth]{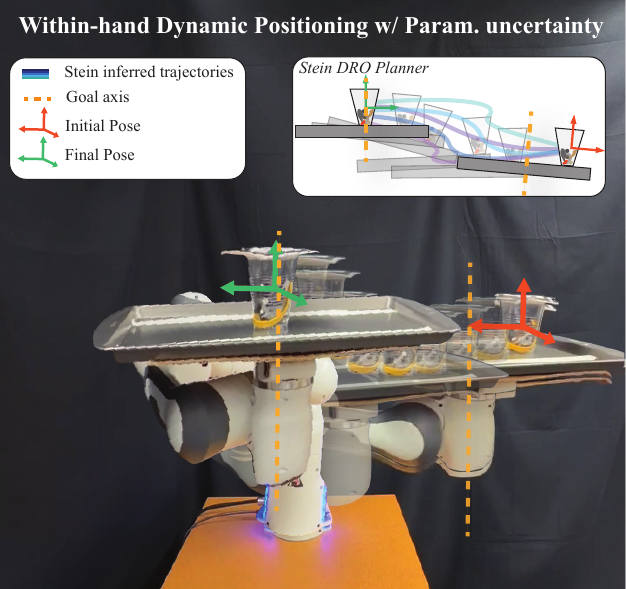}
        \caption{\textbf{Within-hand dynamic positioning of a cup with unknown mass distribution and friction coefficient. } Our proposed Stein Variational Distributionally Robust Optimizer (SV-DRO) applied to a within-hand positioning task via controlled sliding.
        The demonstration shown requires the robot to slide an object with unknown physical parameters (inertia, mass, and friction coefficient) to a goal state located at the center of the tray.
        Our approach enables control policies to reason over uncertainty-sensitivity to task ensuring reliable performance.}
        \label{fig:abstract}
        \vspace{-2em}
    \end{figure}

    Most prior work on uncertainty-aware manipulation control under parameter uncertainty spans data-driven methods and classical adaptive control approaches, each expressing a different facet on robust optimization \cite{ZHANG2017188,https://doi.org/10.1002/rnc.1823,doi:10.1126/scirobotics.adt1497}. 
    Reinforcement learning (RL) and related policy optimization methods in \cite{heess2015learningcontinuouscontrolpolicies,openai2019learningdexterousinhandmanipulation} have been extensively applied to contact-rich tasks, where the ability to implicitly learn from experience offers potential adaptability in uncertainty-filled environments. 
    These methods, however, often necessitate high compute and data quality, and sufficient coverage over all possible parameter uncertainties for high-performance guarantees \cite{tobin2017domainrandomizationtransferringdeep,Peng_2018,rajeswaran2017epoptlearningrobustneural}.
    Other works presented in \cite{Abraham_2020,lowrey2019planonlinelearnoffline,7354126} leverage sampled trajectory ensembles to mitigate the effects of parametric uncertainty through optimizing controls over average performance.
    These sample-based ensemble approaches have limitations, however, in that they fail to adapt robustly to parameter uncertainty under a prohibited number of samples, and compute only \emph{average} performance that is highly dependent on the range of parameter uncertainty. 
    Other, classical control methods \cite{liu2025datadrivendistributionallyrobustoptimal,long2025sensorbaseddistributionallyrobustcontrol,8593893,10161246} promise control policies from conservative, worst-case formulations that adapt to uncertainty via a Distributionally Robust Optimization (DRO) formulation \cite{kuhn2025distributionallyrobustoptimization}.
    These methods compromise task performance in response to worst-case uncertainty modeling leading to worse overall performance in contact-rich settings due to the variability of object dynamics subject to contact.
    
    \emph{Thus, the goal of this work is to expand the capabilities of model-based control for robotic manipulation through more flexible uncertainty modeling that retains performance while adapting to uncertainty in manipulation. }

    To achieve this goal, we formulate robotic control under parametric uncertainty as a Distributionally Robust Optimization problem and introduce a Stein variational inference-based controller~\cite{steinOG,liu2019steinvariationalgradientdescent}. 
    Rather than optimizing against a fixed ambiguity set, we use Stein variational gradient descent (SVGD) to evolve deterministic particles drawn from a prior into a task-aware parameter posterior. 
    This provides a non-parametric, parallelizable approximation of how parameter uncertainty affects task performance~\cite{YAN2021114087,Li_2020}. 
    Fig.~\ref{fig:abstract} illustrates this idea on within-hand manipulation, where an object with uncertain mass distribution and friction is reliably moved to the tray center. 
    Overall, SV-DRO uses exact\footnote{SVGD can exactly match expectations of functions within a Stein-induced function class and converges to the target distribution in the infinite-particle limit~\cite{liu2019steinvariationalgradientdescent}; with finite particles, it gives an approximate posterior representation.} Stein inference to identify uncertainty-sensitive actions critical to contact-rich manipulation.
    
    In practice, SV-DRO achieves strong task performance under broad parameter uncertainty and limited feedback compared with existing model-based baselines~\cite{Abraham_2020,kuhn2025distributionallyrobustoptimization}. 
    The central distinction is philosophical: instead of planning for a globally worst-case model, SV-DRO reshapes uncertainty toward parameter variations that most affect the current task and optimizes against those task-sensitive particles. 
    We demonstrate this behavior on contact-rich manipulation tasks including within-hand dynamic positioning and bimanual Push-T.
    
    \emph{To summarize, the contributions of this paper are the following:}
    \begin{itemize}
        \item Derivation of a Stein variational inference-based distributionally robust controller that accurately models uncertainty and generates control solutions that enhances task performance.
        \item Demonstration of 3$\times$ improvement of our proposed method compared against existing DRO and ensemble worst-case manipulation controllers subject wide ranging parameter uncertainty. 
        \item Showcasing continually reliable within-hand manipulation on a rolling experimental setting.
    \end{itemize}

    The remainder of this paper is structured as follows: Section~\ref{sec:related_work} discusses prior relevant work. Section~\ref{sec:preliminaries} 
    overviews preliminary material on Distributionally Robust Control and Stein Variational Gradient Descent. Section \ref{sec:main} outlines our proposed method that combines Stein Variational Inference with DRO, and details theoretical analyses on our method. And finally, Section \ref{sec:results} discusses experimental results, Section \ref{sec:limitation} covers limitations, and Section \ref{sec:conclusion} proposes  future research directions.
    
\section{Related Work}
\label{sec:related_work}

    \subsection{Distributionally Robust Optimization}

        Distributionally Robust Optimization (DRO) optimizes performance against worst-case probability distributions within an ambiguity set, providing robustness to uncertainty at the cost of potential conservatism~\cite{kuhn2025distributionallyrobustoptimization,Rahimian_2022}. 
        Prior work has made DRO tractable for control by using duality under Kullback--Leibler ambiguity sets, reducing the original min--max problem to a single-level optimization~\cite{liu2025datadrivendistributionallyrobustoptimal,8593893}. 
        However, these approaches typically require a parametric reference distribution estimated through maximum-likelihood or related data-driven procedures, so robustness is only guaranteed near a learned nominal uncertainty model. 
        Wasserstein DRO formulations avoid explicit distribution fitting by using empirical uncertainty samples~\cite{long2025sensorbaseddistributionallyrobustcontrol}, but they assume direct access to samples from the uncertain process~\cite[Assumption 6.3]{long2025sensorbaseddistributionallyrobustcontrol,doi:10.2514/6.2018-0666}, which is restrictive when uncertainty corresponds to latent physical parameters such as mass, inertia, friction, or geometry.
        
        These assumptions are especially limiting in contact-rich manipulation, where latent parameters cannot usually be sampled directly, feedback may be sparse or indirect, and contact dynamics introduce discontinuous, mode-dependent behavior outside the smooth settings considered by many DRO methods~\cite{liu2025datadrivendistributionallyrobustoptimal,long2025sensorbaseddistributionallyrobustcontrol}. 
        Moreover, optimizing against all admissible perturbations can produce overly conservative behavior that sacrifices task performance. 
        In this work, we extend DRO-based control by using Stein variational inference to evolve particles over latent physical parameters according to task-relevant sensitivity, rather than relying on a fixed reference distribution or a sampling oracle. This enables robust MPC for contact-dominated manipulation while focusing computation on uncertainty that directly affects task success.

    \subsection{Stein Variational Inference in Robot Control}

        Variational Inference (VI) approximates an intractable target distribution by optimizing a surrogate distribution to minimize the Kullback-Leibler divergence, but its effectiveness depends on choosing an expressive variational family \cite{steinOG}, which is difficult in high-dimensional or multi-modal settings. 
        Stein variational inference, or Stein variational gradient descent (SVGD), avoids this restriction by evolving a set of particles through deterministic functional gradient descent in a reproducing kernel Hilbert space \cite{liu2019steinvariationalgradientdescent}. 
        This particle-based update combines attraction toward high-probability regions with a kernel-induced repulsion term that preserves diversity, making SVGD well suited for representing complex uncertainty distributions in control. 
        Prior work has used SVI in MPC to maintain multi-modal control particles \cite{lambert2021steinvariationalmodelpredictive}, optimize constrained trajectory distributions \cite{power2024constrainedsteinvariationaltrajectory}, generate informative failures \cite{huang2025fail2progresslearningrealworldrobot}, perform task and motion planning \cite{lee2024stampdifferentiabletaskmotion}, and jointly infer controls and dynamics parameters online \cite{barcelos2021dualonlinesteinvariational}.
        
        Our use of SVI differs from estimation-driven Stein MPC methods \cite{barcelos2021dualonlinesteinvariational} that update parameter particles from online state-transition measurements to recover a dynamics posterior \cite{barcelos2021dualonlinesteinvariational}. Such approaches require sufficiently informative feedback, which can be sparse, low-frequency, noisy, or only indirectly related to latent contact parameters such as mass distribution, inertia, and friction in contact-rich manipulation. In contrast, we do not use SVGD to identify the true physical parameters through a measurement-likelihood, contact-aware maximum a-posteriori, or other contact-aware active learning objective \cite{sathyanarayan2025behaviorsynthesiscontactawarefisher, vantilborgh2025dualcontrolreferencegeneration, doi:10.1073/pnas.2011905118}. 
        Instead, we construct a task-aware parameter posterior induced by the optimality gap, transport particles toward regions where the planned trajectory is most sensitive to uncertainty, and embed the resulting particle set inside a distributionally robust MPC objective. This allows the controller to optimize against uncertainty that directly affects task success, rather than uncertainty that merely best explains recent state feedback.

    \subsection{Domain Randomization}

        Methods in reinforcement learning (RL) \cite{zhang2020sampleefficientreinforcementlearning,NIPS1999_6449f44a} based on domain randomization \cite{8460528,8202133} aim to achieve robustness under parametric uncertainty by training policies over a distribution of system parameters sampled offline. 
        In this setting, a policy is optimized to perform well in expectation across the randomized domain, implicitly learning adapting to variations in model dynamics or environmental scenery. 
        While effective, such approaches typically require extensive data collection and training, and their performance depends critically on the choice and coverage of the parameter distribution used during training \cite{9308468,tiboni2024domainrandomizationentropymaximization}. 
        Moreover, the resulting domain randomization process is inherently \emph{task-agnostic} with respect to parameters with larger uncertainties, as the prior distribution used for sampling is fixed a priori and not adapted based on the specific control objective at execution time.

        In contrast, our approach performs online, task-aware adaptation of the parameter distribution within a model predictive control (MPC) framework, eliminating the need for offline training or repeated system interaction. 
        By leveraging SVI, the parameter distribution is iteratively reshaped according to the optimality gap, focusing computation on uncertainty directions that directly impact task performance. 
        This enables zero-shot control under uncertainty while maintaining computational tractability, and yields policies that are both robust and dynamically responsive to task-specific sensitivities. 
        As such, our method can be viewed as complementary to domain randomization, offering a more data-efficient alternative that emphasizes task-relevant uncertainty at test time rather than distributional coverage during training.

\section{Preliminaries}
\label{sec:preliminaries}

    \subsection{Distributionally Robust Control}

        We now formalize the control problem under parametric uncertainty using a distributionally robust perspective. 
        In this setting, the objective is to synthesize a trajectory that minimizes a task-dependent cost while remaining robust to parameter uncertainty.
        To do so, we first consider the following optimization problem,
        \begin{equation}\label{eq:task_opt}
            \begin{split}
                &\min_{\tau=\{(x_k,u_k,c_k)\forall k\in[0,t_h]\}} \mathcal{J}_\theta(\tau) \\
                \textrm{s.t}
                &\begin{cases}
                    x_0 \in \mathcal{X} & \textrm{(init. state)} \\
                    x_{k+1} = f_\theta(x_k,u_k) & \textrm{(dynamical model)} \\
                    c_{k+1} \in \mathcal{C}(x_k) & \textrm{(contact model)}
                \end{cases}
            \end{split}
        \end{equation}
        where $\mathcal{J}_\theta(\cdot): \mathcal{X} \times \mathcal{U} \times \mathcal{C} \times\Theta \rightarrow \mathbb{R}$ is the task-relevant cost to minimize, $\tau=\{(x_k,u_k,c_k)\forall k\in[t,t+t_h]\}$ is a sequence of states $x\in\mathcal{X}$, control inputs $u\in\mathcal{U}$, and contact impulses $c_k \in \mathcal{C}$, with $t \in [0,T]$ being the current time iterate within $T$ total time iterations and $t_h$ being the planning horizon, and $\theta\in\Theta$ are physics model parameters.
        The Lagrangian $\mathcal{L}: \mathcal{X} \times \mathcal{U} \times \mathcal{C} \times\Theta\rightarrow \mathbb{R}$ of the above optimization problem can be written as,
        \begin{equation}\label{eq:lagrangian}
            \mathcal{L}(\tau, \theta) = \mathcal{J}_\theta (\tau) + \lambda^\top h_\theta(\tau)
        \end{equation}
        where $\lambda \in \mathbb{R}^{n_{eq}}$ is a vector of Lagrange multipliers, and $h_\theta: \mathcal{X} \times \mathcal{U} \times \mathcal{C} \times\Theta\rightarrow \mathbb{R}^{n_{eq}}$ is the vector of equality constraints from Eq. \ref{eq:task_opt}.
        In this paper, we implement Eq. \ref{eq:task_opt} via forward rollouts under the surrogate model $f_\theta$ (which we construct), which computes contact impulses using a spring-like contact model \cite{leesoft}, and execute the first control input in a receding-horizon manner, avoiding the need to explicitly enforce equality constraints in practice.
        Distributionally Robust Optimization (DRO) seeks to optimize over the Lagrangian via a bi-level optimization problem,
        \begin{equation}
            \min_\tau \sup_{p \in \mathcal{P}} \mathbb{E}_{\theta \sim p(\theta)} \left[ \mathcal{L}(\tau,\theta) \right]
        \end{equation}
        where $p$ is a surrogate distribution over $\theta$, and $\mathcal{P}$ is the following ambiguity set,
        \begin{equation}
            \mathcal{P} = \left\{p \ | \ \mathbb{D}_{KL} \left[ p \| q\right] \leq \epsilon\right\}
        \end{equation}
        where $q$ is true unknown target distribution.
        Since finding an ambiguity set $\mathcal{P}$ is challenging to find a-priori, the above DRO problem is commonly approximated as shown in \cite{liu2025datadrivendistributionallyrobustoptimal} as,
        \begin{equation}
            \label{eq:softdro}
            \min_{\tau, \beta > 0} \beta \epsilon + \beta \log \mathbb{E}_q \Bigg[ \exp\Big(\frac{1}{\beta} \mathcal{L}(\tau, \theta)\Big) \Bigg],
        \end{equation}
        where $\beta$ is a dual auxiliary variable.
        Optimizing the second term is more commonly known as the Risk-Averse Optimal Control Problem \cite{Whittle_1981,Ian1973OptimalSL}.
        Note that a similar objective was derived in~\cite{Abraham_2020} through the ensemble planning approach.
        For sufficiently large $\beta$, it holds true via second order Taylor Expansion that,
        \begin{equation}
            \label{eq:log_expectation_dro_approx}
            \beta \log \mathbb{E}_q \Bigg[ \exp\Big(\frac{1}{\beta} \mathcal{L}(\tau, \theta)\Big) \Bigg] \approx \mathbb{E}_q \big[ \mathcal{L}(\tau, \theta) \big] + \frac{1}{2} \frac{1}{\beta} \mathbb{V}_q \big[ \mathcal{L}(\tau, \theta) \big]
        \end{equation}

        \begin{assumption}\label{assum:epsilon_pq}
            (KL-ball ambiguity set)\cite{liu2025datadrivendistributionallyrobustoptimal,long2025sensorbaseddistributionallyrobustcontrol} the surrogate distribution $p$ exists in the $\epsilon$-neighborhood of the target distribution, where $\mathbb{D}_{KL}(p \| q) \leq \epsilon$ given small $\epsilon > 0$. 
        \end{assumption}
        To ensure that the above assumption holds, \cite{liu2025datadrivendistributionallyrobustoptimal} first estimates the target distribution through a data-driven maximum-likelihood estimation approach prior to k-Nearest-Neighbor sampling from the resulting inferred distribution. 
        The above assumption already holds if terms in Eq. \ref{eq:log_expectation_dro_approx} is approximated as,
        \begin{equation}
            \begin{split}
                \mathbb{E}_q \big[ \mathcal{L}(\tau, \theta) \big] &\approx \mathcal{L}(\tau, \theta) \Big|_{\theta=\bar{\theta}} \\
                \mathbb{V}_q \big[ \mathcal{L}(\tau, \theta) \big] &\approx \nabla_\theta \mathcal{L}(\tau, \theta) \Big|_{\theta=\bar{\theta}}^\top \Sigma_{\theta\theta} \nabla_\theta \mathcal{L}(\tau, \theta) \Big|_{\theta=\bar{\theta}}
            \end{split}
        \end{equation}
        where $\bar{\theta}$ is the parameter mean value, and $\Sigma_{\theta\theta} \in \mathbb{R}^{\Theta \times \Theta}$ is the variance about the parameter.

        Note that Assumption \ref{assum:epsilon_pq} may not always be guaranteed in the general case, and is scarce in a variety of manipulation problems.
        Therefore, implementing standard DRO in a manipulation setting filled with high uncertainty, our priors may not be strong enough to ensure that Assumption \ref{assum:epsilon_pq} holds.
        In this work, we develop a novel approach that extends DRO-based methods without the need of asserting Assumption \ref{assum:epsilon_pq} by exactly measuring uncertainty through a guided task-aware variational inference-based method.

    \subsection{Stein Variational Gradient Descent}

        Given a random variable $\theta \in \Theta$, the goal of variational inference is to minimize the Kullback-Leibler (KL) divergence measure over the target distribution $q(\theta)$ and surrogate $p(\theta)$,
        \begin{equation}
        \begin{split}
            p^*(\theta) &= \arg \min_{p\in\mathcal{P}} \{\mathbb{D}_{KL} (p \| q) \\
            &\equiv  \mathbb{E}_{\theta \sim p(\theta)} \big[\log p(\theta)\big] - \mathbb{E}_{\theta \sim p(\theta)} \big[\log q(\theta)\big] + \log z\}
        \end{split}
        \end{equation}
        where $z$ is a constant and thus ignored in the optimization.
        Stein Variational Gradient Descent (SVGD) \cite{liu2019steinvariationalgradientdescent} offers a solution through kernel and gradient descent methods that avoids committing to a parametric variational family $\mathcal{P}$ entirely.

        SVGD initializes a set of points $\{\theta_0^i\}_{i=1}^N$ from a prior distribution $p(\theta)=\mathcal{N}(\hat{\theta},\Sigma_\theta)$ in $\theta$. The points evolve according to the one-to-one mapping,
        \begin{equation}
            \theta_{t+1}^i \gets \theta_t^i + \alpha \ \phi_{p,q}^*(\theta_t^i)
        \end{equation}
        where $\phi_{p,q}^*(\theta_t^i)$ is a smooth function that captures the steepest descent direction that minimizes the KL-divergence measure at the $t^{\textrm{th}}$ step, with $T$ being the total number of SVGD iterations, and $\alpha>0$ is a sufficiently small perturbation magnitude.
        We consider $\phi_{p,q}^*$ to be the solution to the steepest descent problem,
        \begin{equation}\label{eq:steepest_desc_opt}
            \begin{split}
                \phi_{p,q}^*(\cdot) &= \arg \min_{\phi \in \mathcal{H}^d} \{ -\nabla_\theta D_{KL} (p \| q) \ | \ \| \phi \|_{\mathcal{H}^d} \leq 1 \} \\
                &=\mathbb{E}_{\theta_t \sim q} \big[\mathcal{A}_q k(\theta_t,\cdot) \big]
            \end{split}
        \end{equation}
        where $\mathcal{A}_q(\cdot): \Theta \rightarrow \mathcal{S}_\Theta$ is Stein's identity computed for a universal positive definite kernel function $k: \Theta \times \Theta \rightarrow \mathbb{R}$ operating in a dense $\mathcal{H}^d$ in the space of continuous functions $C(\Theta,\mathbb{R}^d)$, where $\mathcal{H}^d$ is the corresponding Reproducing Kernel Hilbert Space (RKHS).
        The closed form solution to Eq. \ref{eq:steepest_desc_opt} is computed to be,
        \begin{equation}
            \begin{split}
                \phi_{p,q}^*(\cdot) = \mathbb{E}_{\theta_t \sim p}\big[k(\theta_t,\cdot) \nabla_\theta \log q(\theta_t) + \nabla_\theta k(\theta_t,\cdot) \big]
            \end{split}
        \end{equation}
        In practice, it is common to approximate the steepest descent direction $\phi_{p,q}^*$ using Monte-Carlo sampling over random variables $\theta$,
        \begin{equation}
            \phi_{p,q}^*(\cdot) \approx \frac{1}{N} \sum_{t=1}^N k(\theta_t^i,\cdot) \nabla_\theta \log q(\theta_t^i) + \nabla_\theta k(\theta_t^i,\cdot).
        \end{equation}
        Over sufficient number of iterations of SVGD and number of samples $N$, the sampled parameters become a sufficient approximation of the target $q(\theta)$.
        In this work, we utilize SVGD as a guiding method to evolve parameter points initially sampled from a prior based on task-aware posterior gradients in order to deterministically model the parameter distribution subject to task-sensitivities and robustly solve contact-rich manipulation tasks.
        Note that while contact-based control problems are non-smooth in nature, we utilize a Lagrangian function that is smooth to take gradients over via SVGD.

    \begin{figure*}
        \centering
        \includegraphics[width=\linewidth]{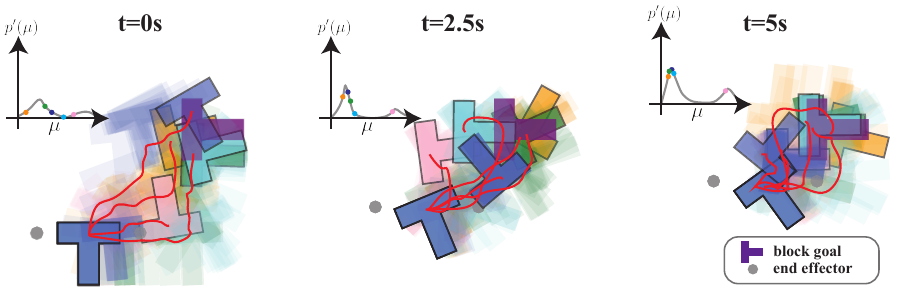}
        \caption{
        \textbf{Illustrative Example of SV-DRO.}
        We visualize SV-DRO on the bimanual Push-T task, where two end effectors move a block to a goal under uncertain block--ground friction $\mu$. 
        Each translucent trajectory is a rollout induced by a different friction particle, initially sampled from the prior and then transported by SVGD using the task optimality gap. 
        Rather than identifying friction before control or optimizing against a fixed worst case, SV-DRO reshapes uncertainty online toward parameter realizations most consequential for task success.
        }
        \label{fig:intuition}
        \vspace{-2em}
    \end{figure*}
\section{Stein Variational Distributionally Robust Control}
\label{sec:main}

    In this section, we derive the main contribution of this work that formulates the Distributionally Robust Control problem via Stein variational inference. 
    \subsection{Variational Approximation of Task-Dependent Parameter Posterior}
        We first formulate the parameter posterior distribution via SVI. 
        Given a set of randomly drawn parameter points $\{\theta\}_{i=1}^N \sim p(\theta)$, where $p(\theta)$ is a prior distribution, we approximate the task-dependent parameter posterior given a likelihood output $\mathcal{O}$ as,
        \begin{equation}
            \begin{split}
                p(\mathcal{O} \mid \theta) = \exp\big(\delta \mathcal{L}(\tau^\ast,\theta) \big)
            \end{split}
        \end{equation}
        where $\tau^\ast$ is the solved optimal trajectory, and the likelihood posterior $p(\mathcal{O} \mid \theta)$ relates to the target $q^\ast(\theta)$ as,
        \begin{equation}
            q^*(\theta) = \arg \min_{q\in Q} D_{KL} \Big(q(\theta) \,||\, p(\mathcal{O}|\theta) \Big).
        \end{equation}
        Here, $\delta \mathcal{L}=\mathcal{L}(\tau^*,\theta)-\mathcal{L}(\tau^*,\theta^*)$ denotes the optimality gap, where $\theta^*$ is the true parameter and $\tau^*$ is the solved trajectory. 
        Since $\theta^*$ is unavailable, we use the particle mean $\theta^* \approx \frac{1}{N}\sum_{i=1}^N \theta^i$ as a stable, low-variance reference for evaluating optimality, which is a standard and consistent estimator that provides stable low-variance reference for defining the optimality gap. 
        The key idea is not to recover the true parameter, but to infer a task-consistent posterior whose particles capture the parameter variations that matter for producing a single robust control action under uncertainty. 
        The likelihood output $\mathcal{O}$ encodes parameter-dependent outcomes in task optimality.
        The posterior is then calculated as a parameter-conditioned optimality gap distribution,
        \begin{equation}
            p'(\theta) = \frac{p(\mathcal{O}|\theta)p(\theta)}{z} = \frac{\exp \big(\delta \mathcal{L}(\tau^\ast,\theta) \big) \ p(\theta)}{z}
        \end{equation}
        where $z=\int_\Theta p(\mathcal{O}|\nu)p(\nu) d\nu$ which reveals the following log posterior,
        \begin{equation}
        \label{eq:log_posterior}
                \log p'(\theta) = \log p(\theta) + \delta \mathcal{L}(\tau, \theta) + \cancelto{\textrm{constant}}{\log z}.
        \end{equation}
        We now seek to utilize the posterior distribution to evolve the $t^\textrm{th}$ sampled set of estimators $\{\theta_t^i\}_{i=1}^N$ drawn from a prior $p(\theta)$ by evaluating the steepest descent direction $\phi^*: \Theta \rightarrow \mathcal{S}_\Theta$ using Stein's identity.
        Given a prior $p(\theta)$ and a universal positive definite kernel $k: \Theta \times \Theta \rightarrow \mathbb{R}$, the resulting SVGD steepest descent solution to Eq. \ref{eq:steepest_desc_opt} is,
        \begin{equation}\label{eq:steepest_desc_params}
            \begin{split}
                \phi^*(\cdot) &= \mathbb{E}_{\theta \sim p(\theta)} \big[ k(\theta^i,\cdot) \nabla_\theta \log p'(\theta^i) + \nabla_\theta k(\theta^i,\cdot) \big] \\
                &\approx \frac{1}{N} \sum_{i=1}^N k(\theta_t^i,\cdot)\nabla_{\theta} \log p'(\theta_t^i) + \nabla_{\theta} k(\theta_t^i,\cdot)
            \end{split}
        \end{equation}
        where $k(\theta,\hat{\theta})=\exp (-\|\theta-\hat{\theta}\|^2/h)$ is chosen as a Radial Basis Function (RBF) kernel~\cite{zhuo2018messagepassingsteinvariational}.
        
        Because the true physics model parameters $\theta^*$ are unknown and inestimable, we require a method that expands on existing DRO formulations  \cite{kuhn2025distributionallyrobustoptimization,liu2025datadrivendistributionallyrobustoptimal,long2025sensorbaseddistributionallyrobustcontrol} by \emph{diversifying} the parametric inferences $\theta \sim p(\theta)$ through an evolving task-aware surrogate posterior that exactly adapts to the uncertainty present.
        The benefit of performing SVGD for parametric inference is that we can compute varied, diverse parameter-sensitive gradients to the task across a wide range $\theta$, thus reasoning over the manipulation task over an \emph{ensemble} of sampled parameter estimates that accurately estimates the task-aware posterior.
        
        By utilizing SVI, we focus \emph{limited computational resources} on task-critical uncertainties. 
        By iteratively shaping the uncertainty distribution toward high-risk regions, we derive a robust controller that avoids the conservatism of worst-case methods while maintaining the sample efficiency of model-based approaches.

    \begin{figure*}[t!]
        \centering
        \includegraphics[width=\linewidth]{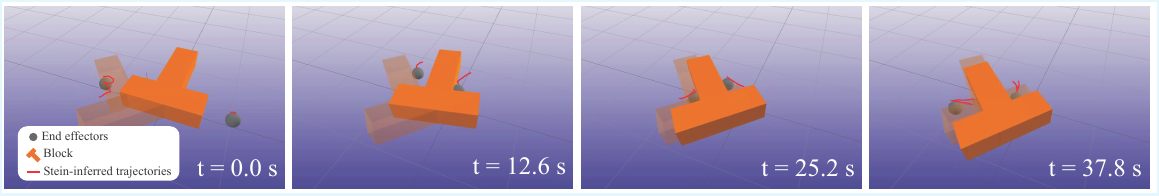}
        \caption{\textbf{Bimanual manipulation of Push-T with unknown mass distribution. } 
        Here, we demonstrate a control solution to the bimanual push-T task using our proposed SV-DRO.
        We observe that despite high uncertainty in mass, the bimanual end-effectors are able to successfully manipulate the T-block to the desired position while avoiding task-sensitive actions as a function of the uncertain parameters.}
        \label{fig:push-T-example-traj}
    \end{figure*}

    \begin{table*}[t]
        \centering
        \small
        \setlength{\tabcolsep}{5pt}
        \renewcommand{\arraystretch}{1.15}
    
        \begin{tabularx}{\textwidth}{@{} l *{4}{>{\centering\arraybackslash}X} @{}}
            \toprule
            & \multicolumn{4}{c}{\textbf{Success \% / Time to Completion (s)}} \\
            \cmidrule(lr){2-5}
            & \multicolumn{2}{c}{\textbf{Bimanual Push-T}}
            & \multicolumn{2}{c}{\textbf{Within-Hand Dynamic Positioning}} \\
            \cmidrule(lr){2-3} \cmidrule(lr){4-5}
    
            \textbf{Method}
            & $\leq 10$ cm
            & $\leq 1$ cm
            & $\leq 10$ cm
            & $\leq 1$ cm \\
            \midrule
    
            \textbf{SV-DRO (Ours)}
            & \bfseries 93.25\% / 16.45 $\pm$ 2.45
            & \bfseries 84.38\% / 14.21 $\pm$ 2.22
            & \bfseries 100\% / 12.00 $\pm$ 4.78
            & \bfseries 100\% / 12.00 $\pm$ 4.78 \\

            DuST-MPC \cite{barcelos2021dualonlinesteinvariational}
            & 71.88\% / 14.41 $\pm$ 3.58
            & 59.38\% / 13.32 $\pm$ 2.67
            & 84.38\% / 20.21 $\pm$ 5.62
            & 46.88\% / 20.18 $\pm$ 1.24 \\
    
            EMPPI \cite{Abraham_2020}
            & 43.75\% / 18.88 $\pm$ 6.29
            & 31.25\% / 14.89 $\pm$ 3.12
            & 59.38\% / 22.44 $\pm$ 3.22
            & 40.63\% / 21.78 $\pm$ 3.43 \\
    
            MPC
            & 28.13\% / 19.56 $\pm$ 6.21
            & 18.75\% / 16.56 $\pm$ 2.21
            & 31.25\% / 18.40 $\pm$ 5.60
            & 25\% / 16.80 $\pm$ 5.14 \\
    
            DRO \cite{kuhn2025distributionallyrobustoptimization}
            & 3.13\% / 19.12 $\pm$ 3.13
            & 0\% / $\star$
            & 34.38\% / 17.79 $\pm$ 5.00
            & 28.13\% / 18.55 $\pm$ 5.15 \\
    
            \bottomrule
        \end{tabularx}
    
        \caption{Simulated performance comparison over 32 trials of the bimanual Push-T and within-hand dynamic positioning experiments under unknown bounded uniform parameter distribution prior initialization. Results report success rate and completion time as mean $\pm$ standard deviation. $\star$ denotes insufficient completed trials to obtain completion-time statistics.}
        \label{tab:experiment_results}
    \end{table*}

    \subsection{Stein Variational DRO Formulation}
        Our goal in using Stein variational inference is to eliminate the restrictive Assumption \ref{assum:epsilon_pq} which is not necessarily guaranteed for a weak prior $p(\theta)$.
        We therefore require a more robust method  to optimize a control sequence $u^*$ that reasons about \emph{variations} of task sensitivities given that $\theta^*$ is unknown.
        To do so, we adopt the Stein-Variational approach \cite{liu2019steinvariationalgradientdescent} in the DRO formulation on the perturbed set of evolving estimators $\{\theta^i_t\}_{i=1}^N \forall t \in [0, T]$, where $T$ is the total number of time steps.

        
        We first define the evolving estimators $\hat{\theta}$ to follow the SVGD perturbation update,
        \begin{equation}\label{eq:svgd_params}
            \begin{split}
                \theta_{t+1}^i &\gets \theta_t^i + \alpha \phi^*(\theta_t^i)
            \end{split}
        \end{equation}
        where $\alpha > 0$ is the Stein evolve rate and  kernel $k(\theta,\hat{\theta})$ is an RBF (commonly used in Stein Variational inference \cite{lambert2021steinvariationalmodelpredictive,barcelos2021dualonlinesteinvariational}), and $\nabla_\theta \log p'(\theta)$ is the score of the log posterior in Eq.~\eqref{eq:log_posterior}.
        The estimators are then implemented to solve the following optimization problem that extends Eq. \ref{eq:softdro},
        \begin{equation}\label{eq:stein_dro_eq}
            \begin{split}
            \min_{\tau=\{(x_k,u_k,c_k) \forall k\in [t,t+t_h]\}} &\mathbb{E}_q \big[ \mathcal{L}(\tau, \theta) \big]  \approx \frac{1}{N} \sum_{i=1}^N \mathcal{L}(\tau, \theta^i) \\
            &= \mathcal{L} (\tau, \theta)\big|_{\theta=\bar\theta} +  \gamma \ \frac{1}{N} \sum_{i=1}^N \delta \mathcal{L}(\tau, \theta^i_t)
        \end{split}
        \end{equation}
        where $\bar{\theta}$ is the empirically averaged parameter value given the current parameter points, and $\mathcal{L}: \mathcal{T} \rightarrow \mathbb{R}$ where $\mathcal{T} \times \Theta \subset \mathcal{X} \times \mathcal{U} \times \mathcal{C_\theta}$ is the Lagrangian defined in Eq. \ref{eq:lagrangian}.
        Here, tuning parameter $\gamma \in (0, 1]$ is included in order to control the trade-off between optimizing over the nominal parameters versus parameter variations. 
        When $\gamma = 1$, the above formulation equates to the true Monte-Carlo approximation, but can be altered for numerical stability.
        When the auxiliary variable $\beta$ in Eq. \ref{eq:softdro} is sufficiently \emph{large}, the variance term becomes negligible, yielding the approximation in Eq. \ref{eq:log_expectation_dro_approx} where the objective reduces to the expectation term, which recovers Eq. \ref{eq:stein_dro_eq} under $\delta \mathcal{L}(\tau,\theta) \approx \mathcal{L}(\tau^*,\theta) - \mathcal{L}(\tau^*,\bar\theta)$.
        We approximate $\mathbb{E}_{\theta\sim p(\theta)}[\mathcal{L}(\tau, \theta)]$ empirically, as this term captures the gap in task-dependent optimality with respect to parameter uncertainty, which we aim to minimize.

        The core idea is to evaluate task-aware sensitivity gradients across parameter particles $\{\theta_t^i\}_{i=1}^N$, yielding a particle set that better represents variations that enlarge the optimality gap $\delta\mathcal{L}(\tau,\theta)$. 
        Alg.~\ref{alg:sv_dro} reflects the limited-feedback setting considered here: instead of replanning and updating particles after every control input, the controller executes the first $H$ controls, receives a new measured state $\tilde{x}$ from the measured dynamics $\tilde{f}$, and then resolves Eq.~\ref{eq:softdro}. 
        Thus, SVGD updates occur only at the lower feedback frequency, unlike estimation-driven Stein MPC methods that require continuous state-transition measurements for online system identification; here, particles evolve from task-sensitive optimality-gap gradients and must remain robust between intermittent measurements.
        
        Fig.~\ref{fig:intuition} illustrates this mechanism on bimanual Push-T. 
        Each translucent rollout is a predicted object trajectory induced by a different friction particle, with SVGD transporting broad prior samples toward parameter regions that expose large task optimality gaps. 
        The central principle of SV-DRO is therefore to \emph{reshape} uncertainty online into a task-sensitive particle set, rather than fully identifying physical parameters before control.
            
        \begin{algorithm}
        \caption{Stein Variational Distributionally Robust Control (SV-DRO)}\label{alg:sv_dro}
            \begin{algorithmic}[1]
            \Require planning horizon $t_h$, parameter prior $p(\theta)$, $N$ particles, step size $\alpha$, samples $\{\theta_0^i\}_{i=1}^N \sim p(\theta)$, parameterized surrogate model $f_\theta(x_k,u_k)$, time duration $T$, control execution window $H$.
            \State $t = 0$
            \While{$t< T$}
                \State $\tau^* \gets \arg \min_\tau \mathcal{L}(\tau,\theta)\big|_{\theta=\bar{\theta}} + \gamma \frac{1}{N}\sum_{i=1}^N \delta \mathcal{L}(\tau,\theta_t^i)$
                \For{$h=0,\ldots,H-1$}
                    \State Apply $u^*_{t+h}$ to the system
                    \State $\tilde{x}_{t+h+1} \gets \tilde{f}(\tilde{x}_{t+h},u^*_{t+h})$ 
                \EndFor
                \State $x_{t+H} \gets$ measured state after $H$ executed controls
                \State \text{Update SVI using the current task-aware posterior}
                \State $\{\theta_{t+H}^i\}_{i=1}^N \gets$ SVGD($\{\theta_t^i\}_{i=1}^N$) \Comment{Eq.~\ref{eq:svgd_params}}
                \State $t \gets t + H$
            \EndWhile
            \end{algorithmic}
        \end{algorithm}

    \begin{figure*}
        \centering
        \includegraphics[width=\linewidth]{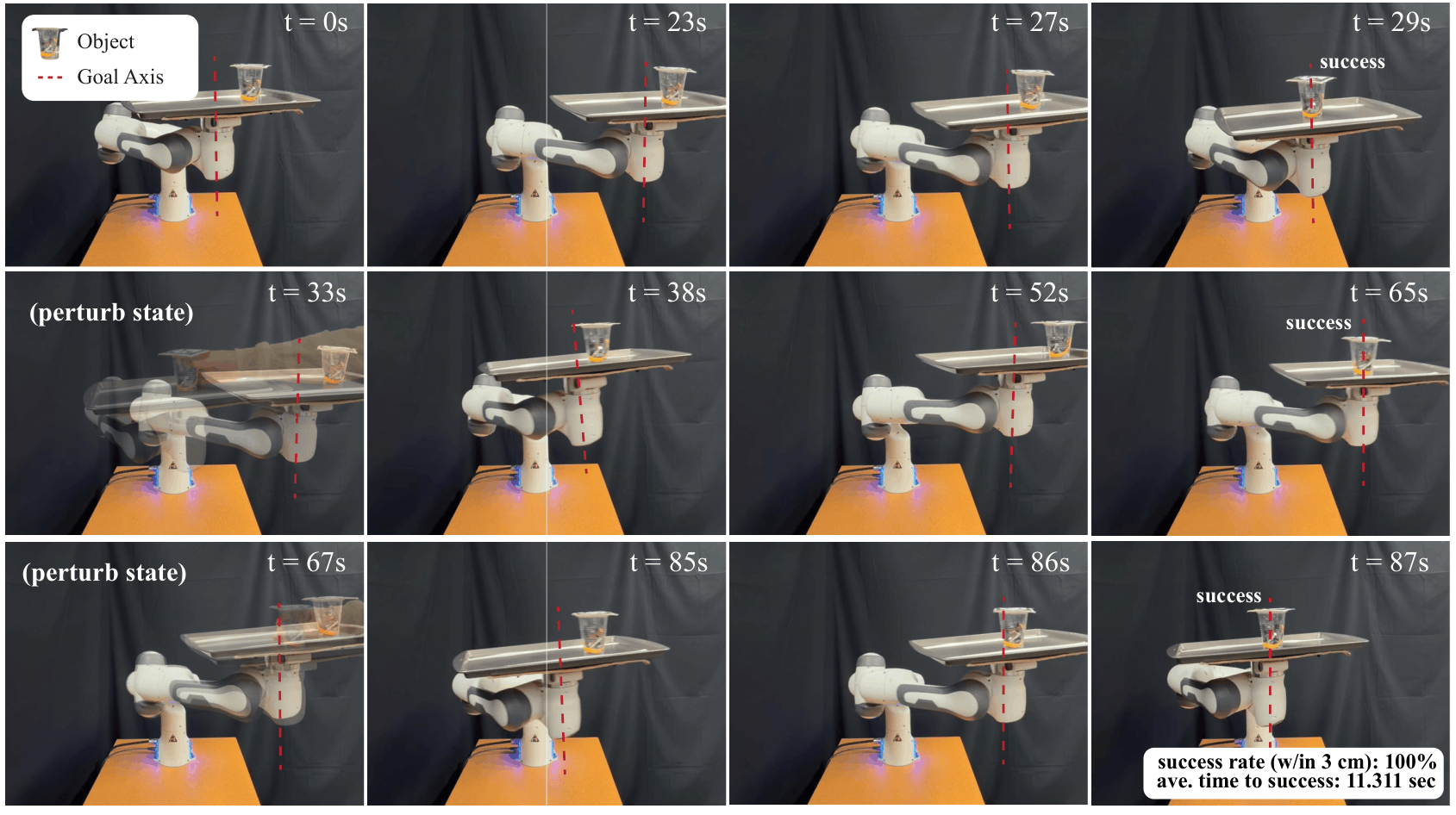}
        \caption{
        \textbf{Continual within-hand dynamic object positioning.}
        SV-DRO repeatedly positions a cup with unknown mass distribution and tray friction over a continuous 3.5-minute run using AprilTag pose feedback~\cite{5979561}. 
        After each successful placement within $1.5$ cm of the goal axis, the cup is randomly perturbed without stopping the experiment. 
        SV-DRO achieves the desired tolerance in approximately $100\%$ of trials, demonstrating reliable dynamic sliding under persistent physical uncertainty.
        }
        \label{fig:waiter_experiment}
        \vspace{-1.5em}
    \end{figure*}
        
\section{Results}
\label{sec:results}

    In this section, we demonstrate and analyze the efficacy of our proposed approach on a variety of contact-based control problems. To do so, we aim to address the following research questions:
    \begin{enumerate}[label=Q{\arabic*})]
        \item Can our proposed approach improve task success rate for manipulation tasks under broad parameter uncertainty?
        \item How does Stein variational inference impact the convergence of the DRO problem?
        \item Does the task-aware parameter inference via Stein Variational DRO benefit over existing uncertainty mitigating approaches?
        \item How does the choice of kernel affect the performance of Stein Variational DRO?
        \item What is the computation scaling for our Stein variational DRO approach with respect to particle parameter size and planning horizon?
    \end{enumerate}
    We specifically answer the above questions through implementations of our approach in the following contact-based control problems: 
    \begin{itemize}
        \item \textbf{Bimanual Push-T. } A bimanual manipulator is tasked with moving a T-shaped block with unknown mass and inertia to a goal pose through non-prehensile interactions.
        \item \textbf{Dynamic Within-Hand Positioning. } A robot manipulates an object with unknown mass distribution (mass and inertia) and friction coefficient to another goal state within its tray hand. A reference diagram is shown in Fig. \ref{fig:abstract}.
    \end{itemize}

    Implementation details for each control problem is detailed in Appendix \ref{app:implementation}. 
    We compare our approach to four following baselines: 
    \begin{enumerate}
        \item Ensemble Model Predictive Control (EMPPI) \cite{Abraham_2020,lowrey2019planonlinelearnoffline},
        \item Dual Stein Variational control (DuST-MPC) \cite{barcelos2021dualonlinesteinvariational}
        \item Conventional DRO \cite{kuhn2025distributionallyrobustoptimization,liu2025datadrivendistributionallyrobustoptimal,long2025sensorbaseddistributionallyrobustcontrol},
        \item and a naive MPC using the prior mean.
    \end{enumerate} 

    The key distinction between Stein-DRO and EMPPI lies in sample evolution, where Stein-DRO adaptively updates parameter particles via SVI shown in Eq. \ref{eq:svgd_params} using task-aware sensitivities, and EMPPI relies on static sampling from the prior distribution.
    DuSt-MPC uses SVGD primarily to perform online Bayesian system identification over dynamics parameters and controls from repeated state-feedback measurements.
    We address the above questions in this section with corresponding answers `A(-)'. We evaluate our experiments by initializing the algorithm's starting parameters from a bounded uniform distribution (more details in Appendix \ref{app:implementation}).
    For EMPPI and DRO, the sampled parameters are resampled after every new plan executed by the MPC controller as defined in Eq. \ref{eq:softdro}.
    Our method initially samples randomly from this parameter prior distribution, followed by subsequent SVGD evolution of the samples.
    \noindent\textbf{A1. Improved Performance Under Uncertainty}

        As shown in Table~\ref{tab:experiment_results}, SV-DRO achieves the highest success across bimanual Push-T and within-hand positioning while improving completion times. 
        Success is measured by final object distance to the goal, matching the task costs used by each controller.
        
        Fig.~\ref{fig:push-T-example-traj} shows SV-DRO coordinating two end effectors under uncertain physical parameters, while Fig.~\ref{fig:waiter_experiment} demonstrates reliable within-hand positioning under uncertain cup mass distribution and tray friction. 
        These tasks rely on intermittent contact and sliding, making them highly sensitive to latent dynamics and difficult for methods that require persistent state feedback for system identification, such as DuSt-MPC. 
        SV-DRO instead transports particles toward parameters that expose high optimality-gap sensitivity, allowing control to adapt to task-relevant uncertainty rather than merely fitting recent measurements. 
        Although our theoretical convergence guarantees (see Appendix \ref{app:proofs} for details) assumes stationary latent parameters, we also evaluate a discrete parameter shift in Fig.~\ref{fig:wine_serving_experiment}. 
        The robot first approaches a goal with a top-heavy wine glass of unknown mass distribution, then the glass is filled with water, abruptly changing its dynamics before the robot serves it to a second goal. 
        SV-DRO adapts to this sudden posterior shift and successfully completes the serving task, empirically demonstrating robustness beyond fixed-parameter settings.

    \begin{figure*}[ht!]
        \centering
        \includegraphics[width=\linewidth]{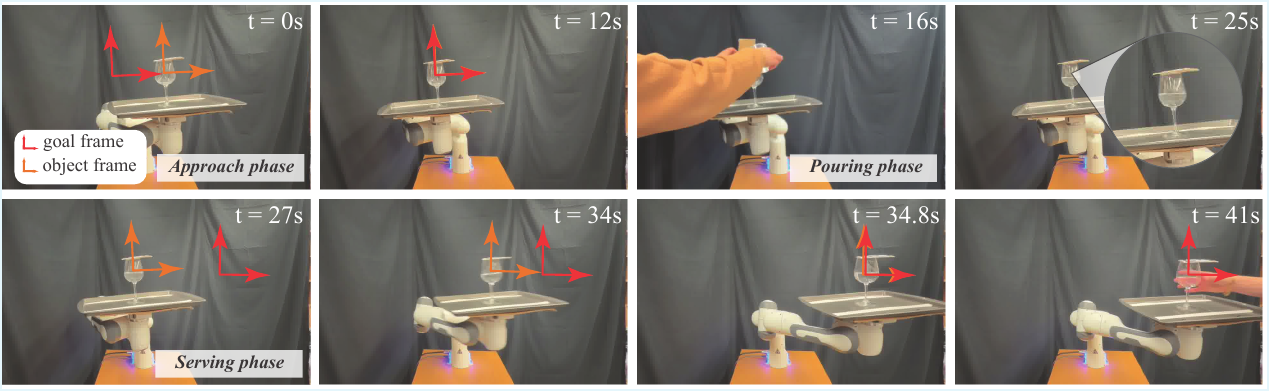}
        \caption{
        \textbf{Hardware demonstration of within-hand dynamic serving with a top-heavy object.}
        The robot moves a wine glass to a goal pose and then serves it to a second goal after the glass is abruptly filled with water, inducing a sudden shift in its mass distribution. 
        SV-DRO adapts to this posterior shift by using task-aware Stein variational gradients, enabling robust manipulation under changing physical uncertainty.
        }
        \label{fig:wine_serving_experiment}
        \vspace{-1.5em}
    \end{figure*}
    \noindent\textbf{A2. SVI Improves Task-Aware Convergence}

        We analyze the convergence properties of our approach in solving the DRO problem to characterize the impact of SVI on the within-hand dynamic object positioning task.
        Our focus is only on solving the trajectory optimization component in Alg. \ref{alg:sv_dro} until convergence, given an initialization according to a range of parameter priors drawn from a uniform bounded distribution. 
        We repeat this process over a set of 16 samples.
        
        The results of the DRO objective convergence are shown in Fig. \ref{fig:mpc_convergence} in comparison to the baseline methods. 
        We find that our approach has significantly less variance due to the deterministic evolution of the SVGD leading to consistent and reliable controllers despite the variability in parameter prior initialization.
        In contrast, the EMPPI approach experiences high variations of loss over solver iterations due to high stochasticity in the task-agnostic random sampling process itself, which is significantly more unreliable.
        The DRO approach converges to a worse-performing controller due to its risk averse nature.

    \begin{figure}
        \centering
        \includegraphics[width=0.9\linewidth]{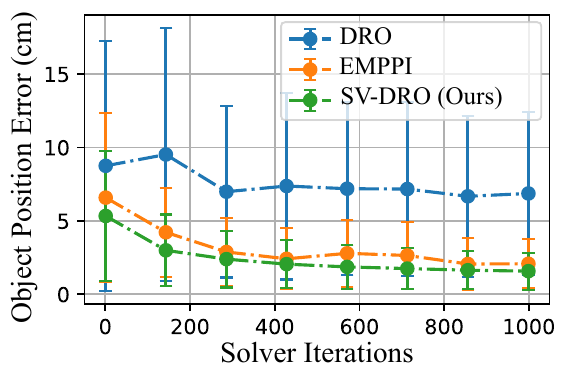}
        \caption{\textbf{Controller convergence based on prior distribution for Within-hand Positioning.} The number of parameter samples, planning horizon, and prior distributions, are fixed. 
        The results of this plot are evaluated over 5 trials per method with 5 random seeds. SV-DRO has improved reliability due to Stein inference. }
        \label{fig:mpc_convergence}
        \vspace{-1em}
    \end{figure}
        
    \noindent\textbf{A3. Task-Aware Inference Improves Manipulation}

        We show the effect on the parameter inference technique on the overall success of solving a manipulation task.
        Our proposed method diversifies the initially sampled parameters based on a task-aware posterior distribution via SVGD, which contributes to higher success rates indicated in Table \ref{tab:experiment_results}.
        However, what is the \emph{exact} relationship between the discrepancy of the evolved samples to the unknown true parameters and the success rate of the manipulation task at hand?
        
        We address this question by showing the initially sampled points from the parameter prior/posterior and their relationship to the respective performance of the within-hand dynamic object positioning experiment.
        Fig.~\ref{fig:param-discrepancy} relates parameter discrepancy (object distance to goal) to task error for the within-hand positioning experiment. 
        SV-DRO achieves consistently lower error across a wide range of discrepancies, showing that successful control does not require particles to converge to the true parameter $\theta^*$. 
        Instead, even with limited sensor feedback, SVGD transports particles toward task-relevant regions that expose high-sensitivity optimality-gap directions. 
        In contrast, DRO and EMPPI degrade as the prior mean moves away from the true parameter, since their uncertainty representations remain conservative or task-agnostic rather than task-adaptive.
        


    \begin{table}[t]
        \centering
        \begin{tabular}{lcc}
        \toprule
        Kernel & Success Rate $\%$ ($< 1$ cm) & Completion Time (s) \\
        \midrule
        RBF & $100.0\%$ & $12.93 \pm 4.78$ \\
        $k = 1$ & $100.0\%$ & $16.54 \pm 2.12$ \\
        IMQ & $100.0\%$ & $8.78 \pm 1.34$ \\
        \bottomrule
        \end{tabular}
        \caption{Effect of kernel choice on SV-DRO performance over 32 trials for within-hand object positioning problem. We compare the standard RBF kernel, a constant kernel, and the inverse multiquadric (IMQ) kernel. 
        The IMQ kernel achieves the fastest completion time, suggesting that its heavier-tailed distribution better preserves particle diversity over task-sensitive regions of the posterior and avoids premature mode collapse of Stein particles.}
        \label{tab:kernel_ablation}
        \vspace{-2em}
    \end{table}
    \noindent\textbf{A4. Choice of Kernel for SV-DRO}

        We evaluate how the SVGD kernel affects SV-DRO performance. The kernel determines particle interaction during posterior transport, balancing attraction toward task-sensitive regions with repulsion that preserves diversity. As shown in Table~\ref{tab:kernel_ablation}, all kernels achieve $100\%$ success, indicating that final task completion is not highly kernel-sensitive in this experiment. However, completion time varies substantially: the constant kernel, which removes distance-dependent repulsion, is slowest at $\sim 16.54$ s, suggesting that task-aware gradients alone can solve the task but less efficiently without geometry-aware particle diversity.
        
        The IMQ kernel achieves the fastest completion time at $\sim 8.78$ s, outperforming the RBF kernel at $\sim 12.93$ s. This suggests that stronger long-range particle interactions help maintain diversity over separated task-sensitive regions and avoid premature particle collapse. Thus, kernel choice primarily affects the efficiency, rather than feasibility, of SV-DRO; heavier-tailed kernels such as IMQ may be preferable in contact-rich tasks where uncertainty-sensitive modes are distributed across parameter space.

    \noindent\textbf{A5. Computational Scaling of SVI}

        Last, we investigate the computational complexity of the proposed SV-DRO approach for the within-hand dynamic positioning experiment. 
        We evaluate the computational time scaling with respect to the parameter sample size $N$ and the planning horizon $t_h$.
        All initial conditions, including random seeds and system state, are fixed.
        In evaluating over the parameter sample, we fix the planning horizon to $0.375$ seconds to be constant for all computations, and vice versa.
        The results shown in Fig. \ref{fig:comp_time_samples} demonstrate improved computational performance with GPU parallelization (NVIDIA GeForce RTX 3080) compared to the CPU (AMD Ryzen Threadripper 3960X 24-Core Processor).

    \begin{figure}
        \centering
        \includegraphics[width=\linewidth]{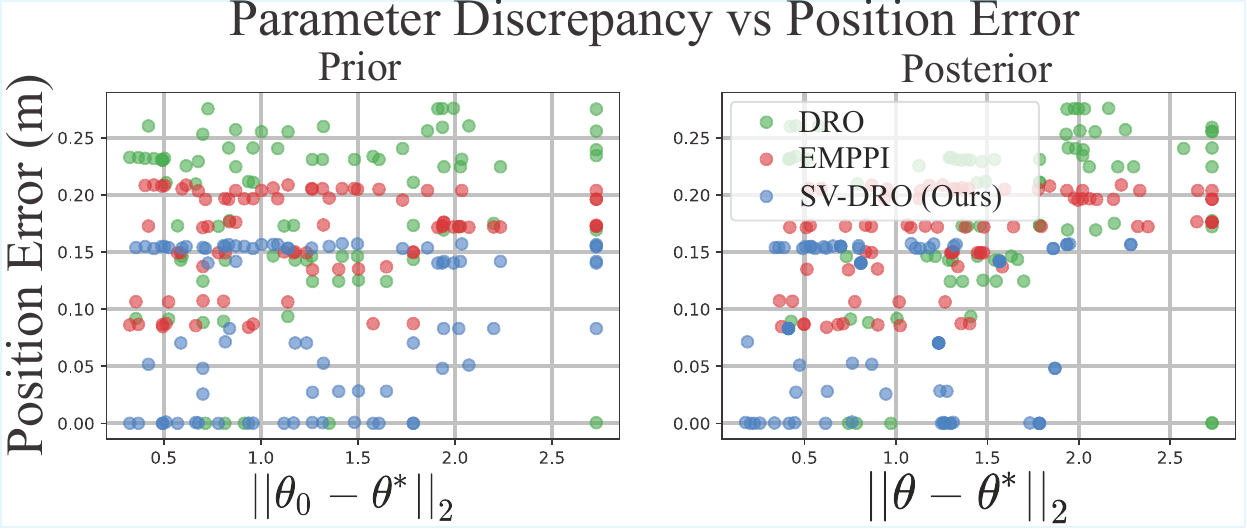}
        \caption{\textbf{Position Error over Parameter Samples. } Here, we demonstrate the relationship between the performance (measured as the distance to the goal state) and the initial parameter mean discrepancy for the within-hand object positioning experiment.
        Our approach adapts to task-sensitive parameter uncertainty leading to lower error despite the discrepancy. 
        We see weaker success rates for the samples from the EMPPI and DRO samples due to their task-agnostic posterior samples.}
        \label{fig:param-discrepancy}
        \vspace{-1.5em}
    \end{figure}

    \begin{figure}
        \centering
        \includegraphics[width=\linewidth]{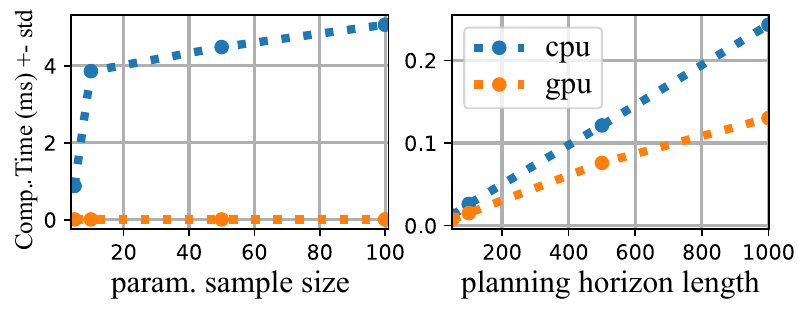}
        \caption{\textbf{Computational Analysis over Parameter Samples of SV-DRO. } Here, we demonstrate the computational scaling of the proposed SV-DRO gradient step as a function of the parameter sample size $N$ (left) and planning horizon $t_h$ (right). With the CPU (AMD Ryzen Threadripper 3960X 24-Core Processor), we observe linear scaling with an increase of sample size, and with the GPU (NVIDIA GeForce RTX 3080), we observe constant computational scaling (about 77 $\mu$s). For change in planning horizon length, we observe linear scaling with the CPU and sublinear scaling with GPU due to added parallelization.}
        \label{fig:comp_time_samples}
            \vspace{-2em}
    \end{figure}

\section{Limitations}
\label{sec:limitation}

\noindent\textbf{Model Error from Overlooked Parameters}
    A fundamental limitation of our proposed work is its model-based nature. 
    We require that the parameter uncertainty class be known in advance which has the risk of overlooked model parameters. 
    In particular, inaccurate modeling, like a contact-model that is unable to capture the relevant frictional forces may have an adverse effect on the overall performance.

\noindent\textbf{Model Complexity}
    A limitation of our approach, and generally model-based approaches, is the need to construct a reduced-order model to avoid mitigating computational complexity. 
    In more general open-world settings, the need to build such models could become a bottleneck, especially when one needs to accurately characterize uncertainty. 
    Model-free approaches excel in these scenarios but lack any form of interpretability. 
    An open challenge is bridging the gap between these classes of methods, especially when reliability and robust performance is necessary. 
    Future work will consider using the flexibility of Stein variational inference under less model-based restrictions.

\noindent\textbf{Sensitivity to Informative Priors}
    As shown in Eq.~\ref{eq:log_posterior}, the prior score $\nabla_\theta \log p(\theta)$ enters every SVGD update, with the task-aware optimality-gap gradient in Alg.~\ref{alg:sv_dro}. 
    Our bounded uniform prior has zero interior score, allowing task sensitivity to dominate particle transport. 
    More informative priors, such as Gaussians centered at nominal estimates, can persistently pull particles toward the prior mean and suppress exploration of task-critical uncertainty directions. 
    Future work will study structured priors to characterize this tradeoff.

    

\section{Conclusion} 
\label{sec:conclusion}

    In summary, we present an approach that enables robots to reliably complete manipulation tasks under broad parameter uncertainty.
    Our approach combines Stein variational inference and Distributionally Robust Optimization to accurately model uncertainty in parameters and synthesize control solutions that enhance task performance.
    We demonstrate accurate modeling of task-aware posterior via Stein variational inference and show significant improvements in contact-rich manipulation problems.
    Last, we showcase the reliability of our approach in real-world settings via a continually running within-hand manipulation experiment.

\section{Acknowledgements}
    This work is supported by the National Science Foundation under award NSF FRR 2238066. 
    Any opinions, findings, and conclusions or recommendations expressed in this material are those of the authors and do not necessarily reflect the views of the National Science Foundation.



\bibliographystyle{plainnat}
\bibliography{main}

    \appendices
    \section{Proofs}\label{app:proofs}
    
        \subsection*{SVGD Convergence in $\Theta$}\label{app:convergence_proof}

            Here we overview the theoretical guarantees of convergence of evolving parameters $\{\theta_t^i\}_{i=1}^N$ in $\Theta$.
            Similar to \cite{liu2019steinvariationalgradientdescent} using \cite[Corollary 6]{korba2021nonasymptoticanalysissteinvariational} shows that SVGD converges to the true posterior distribution with enough samples.
            
            \begin{theorem}[SVGD Convergence]
                Let $p'(\theta) = \exp \big(-V(\theta)\big)$ be the posterior in $\Theta$ where $V(\theta) = -\delta\mathcal{L} - \log p(\theta)$ is the $L$-Lipschitz smooth potential function with smooth prior $p$, $k$ be the universal RBF kernel over $\Theta$ with corresponding RKHS $\mathcal{H}$, and $\textrm{KSD}_{p'}(\mu_\theta)=\|\phi^*(\mu)\|_{\mathcal{H}^\Theta}^2$ is the Kernel Stein Discrepancy (KSD) over the posterior.
                Then, $\exists \zeta>0$ such that,
                \begin{equation}\label{eq:convergence}
                    \frac{1}{N} \sum_{i=1}^N\textrm{KSD}_{p'}(\mu_i) \leq \frac{D_{KL}(p\|p')}{Nc_\zeta}
                \end{equation}
                where $c_\zeta$ is a parametrized term defined in \cite[Corollary 6]{korba2021nonasymptoticanalysissteinvariational}.
            \end{theorem}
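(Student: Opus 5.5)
The plan is to reduce the statement to the descent lemma of \cite{korba2021nonasymptoticanalysissteinvariational} applied to SVGD in the population (mean-field) limit, with target $p'=\exp(-V)$. Along this line, $\mu_n$ denotes the law of the particles after $n$ SVGD steps and $\mu_0=p$ is the prior, and the average in Eq.~\eqref{eq:convergence} is read as a time average over $N$ iterations. First I would check the hypotheses of \cite[Corollary 6]{korba2021nonasymptoticanalysissteinvariational}.
\begin{enumerate}
\item The Hessian $\nabla^2 V$ must be bounded. This comes from the assumed $L$-smoothness of $V=-\delta\mathcal{L}-\log p$, which requires smoothness of the Lagrangian $\mathcal{L}(\tau^*,\cdot)$ in $\theta$; the paper's smooth-Lagrangian choice is what supplies it, and smoothness of $p$ handles the prior score.
\item The RBF kernel must be bounded with bounded derivatives, say $\|k(\theta,\cdot)\|_{\mathcal{H}}\le B$ and $\|\nabla_\theta k(\theta,\cdot)\|_{\mathcal{H}}\le B$. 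This is immediate for $k(\theta,\hat\theta)=\exp(-\|\theta-\hat\theta\|^2/h)$, with $B$ depending only on $h$ and $\dim\Theta$.
\item A uniform bound on the Stein discrepancy along the iterates, $\|\phi^*_{\mu_n}\|_{\mathcal{H}}\le C$ (or equivalently a $T_1$/moment condition on $\mu_n$), is needed so the step-size condition can be met.
\end{enumerate}

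Next comes the one-step descent inequality. I would write the update Eq.~\eqref{eq:svgd_params} as the pushforward $\mu_{n+1}=(I+\alpha\phi^*_{\mu_n})_\#\mu_n$ and Taylor-expand $D_{KL}(\cdot\|p')$ along this map. The first-order term equals $-\alpha\|\phi^*_{\mu_n}\|^2_{\mathcal{H}}=-\alpha\,\mathrm{KSD}_{p'}(\mu_n)$, by the Stein identity and the reproducing property, which is exactly the steepest-descent characterization in Eq.~\eqref{eq:steepest_desc_opt}. The second-order remainder has two pieces, one from $\nabla^2V$ and one from the Jacobian log-determinant of $I+\alpha\phi^*$. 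These are bounded by $\alpha^2\big(L+ B^2(1+\dots)\big)\|\phi^*_{\mu_n}\|^2_{\mathcal{H}}/2$, using the kernel bounds and invertibility of the map for small $\alpha$. Choosing $\alpha$ below the threshold in \cite{korba2021nonasymptoticanalysissteinvariational} gives
\[
D_{KL}(\mu_{n+1}\|p')-D_{KL}(\mu_n\|p')\le -\alpha\Big(1-\tfrac{\alpha \tilde L}{2}\Big)\mathrm{KSD}_{p'}(\mu_n),
\]
and the constant $c_\zeta:=\alpha(1-\alpha\tilde L/2)>0$ is the parametrized term, with $\zeta$ collecting $(\alpha,L,B,C)$.

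Finally, I would telescope this inequality over $n=0,\dots,N-1$ and drop the nonnegative term $D_{KL}(\mu_N\|p')$. This gives $c_\zeta\sum_{n}\mathrm{KSD}_{p'}(\mu_n)\le D_{KL}(\mu_0\|p')=D_{KL}(p\|p')$, and dividing by $Nc_\zeta$ yields Eq.~\eqref{eq:convergence}. For the bound to be finite we need $D_{KL}(p\|p')<\infty$. That follows because $p'\propto e^{\delta\mathcal{L}}p$ shares its support with $p$ and $\delta\mathcal{L}$ is bounded on the support, for example on the bounded uniform prior, though there the prior is not smooth at the boundary and would need mollification.

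The main obstacle is the uniform-in-$n$ control in the third hypothesis. Without it the admissible step size degenerates. I would first try to prove it from a Talagrand $T_1$ inequality for $p'$, which \cite{korba2021nonasymptoticanalysissteinvariational} uses to bound $\mathrm{KSD}$ along iterates; with $\Theta$ compact (bounded physical parameters) it holds trivially.

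A second point is the identification of the reference $\theta^*\approx\bar\theta$ in $\delta\mathcal{L}$. Because $\bar\theta$ depends on the particles, $p'$ is formally non-stationary. I would freeze $\bar\theta$ and $\tau^*$ during an SVGD phase so that $V$ is a fixed potential, consistent with the stationarity caveat noted in A1.
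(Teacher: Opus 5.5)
Your proposal is correct and follows the same route as the paper: verify the three hypotheses of \cite[Corollary 6]{korba2021nonasymptoticanalysissteinvariational} (bounded RBF kernel, $L$-smooth potential $V$, and a uniform KSD bound from bounded moments on compact $\Theta$), then apply the resulting descent-and-telescope bound. The paper only cites that corollary, whereas you unpack the descent lemma and make explicit what it leaves implicit: the kernel-gradient bound, reading $N$ as the number of SVGD iterations in the population limit, finiteness of $D_{KL}(p\|p')$, and freezing $\bar\theta$ and $\tau^*$ during an SVGD phase.
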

            \begin{proof}
                To prove the inequality in Eq. 20, we show that three conditions outlined in \cite[Section 5]{korba2021nonasymptoticanalysissteinvariational} are met.
                We first show that $\|k(\theta,\cdot)\|_{\mathcal{H}^\Theta} \leq \mathcal{B}$ since $k(\theta,\cdot)$ is a continuous RBF kernel over the bounded domain $\Theta$.
                Next, we show that the potential function is $L$-Lipschitz smooth,
                \begin{equation}
                    \|\nabla_\theta V(\theta_1) - \nabla_\theta V(\theta_2)\| \leq L\|\theta_2 - \theta_1\| \quad \forall \theta_1,\theta_2 \in \Theta
                \end{equation}
                since prior $p$ is smooth in $\Theta$ and the contact impulses embedded within the potential function $V$ are modeled to be $L$-smooth.
                Finally, we show that $\textrm{KSD}_{p'}(\mu) \leq \mathcal{M} \forall t \in [0,N]$ by observing the sufficient condition of probability measure $\mu$ over bounded $\Theta$,
                \begin{equation}
                    \sup_t \int_\Theta \|\theta^t\| d\mu(\theta^t) < \infty
                \end{equation}
                 thus concluding the proof.
            \end{proof}

            Note the proof alters a key assumption of the Hessian $\nabla^2V(\theta)$ from \cite[Corollary 6]{korba2021nonasymptoticanalysissteinvariational} by accounting for $L$-Lipschitz smooth potential functions due to the presence of contact.
            Contact-implicit interactions are especially challenging in that they pose non-smooth, non-differentiable constraints to the optimization problem.
            To provide guarantees that an interaction is $L$-smooth, this work models contact impulses using a softplus/softmax contact model \cite{leesoft}.

            Additionally, it is worth noting that the above remains valid for the primary setting of fixed physical parameter targets (e.g., mass and inertia), while the waiter-serving task shown in Fig. \ref{fig:wine_serving_experiment} with changing mass distribution lies outside its assumptions but still demonstrates strong empirical performance.
            Theoretical guarantees of convergence of moving parameter targets will be explored further in future work.

        \subsection*{Optimizing for Parameter-Sensitive Actions}

            It is possible to show that our approach in Eq. \ref{eq:stein_dro_eq} results in provably uncertainty-aware actions for optimal task completion.
            \begin{lemma}[Existence of Optima]
                Let $\tilde{\mathcal{L}} (\tau, \theta):\mathcal{T} \times \Theta\rightarrow \mathbb{R}$ be the objective function from Eq. \ref{eq:stein_dro_eq}.
                Then, $\exists \tau$ such that
                \begin{equation}
                    \tau^*= \arg \min_{\tau\in\mathcal{T}} \tilde{\mathcal{L}}(\tau, \theta)
                \end{equation}
            \end{lemma}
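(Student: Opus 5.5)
The plan is to apply the Weierstrass extreme value theorem to the map $\tau \mapsto \tilde{\mathcal{L}}(\tau,\theta)$ with the particle set $\{\theta_t^i\}_{i=1}^N$ held fixed. It suffices to show two things: this map is continuous (in fact lower semicontinuous is enough), and either the feasible set $\mathcal{T}$ is compact or the map is coercive on $\mathcal{T}$.

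First, I would rewrite the objective of Eq.~\ref{eq:stein_dro_eq} as a finite sum,
\begin{equation*}
\tilde{\mathcal{L}}(\tau,\theta) = (1-\gamma)\,\mathcal{L}(\tau,\bar\theta) + \frac{\gamma}{N}\sum_{i=1}^N \mathcal{L}(\tau,\theta_t^i),
\end{equation*}
where $\delta\mathcal{L}$ is taken with respect to the particle mean $\bar\theta$. With this form, continuity in $\tau$ reduces to continuity of each $\mathcal{L}(\cdot,\theta^i)$. In the implementation the equality constraints are removed by forward rollouts of $f_\theta$, so $\tau$ is determined by the control sequence $u_{t:t+t_h}$ and the initial state. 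Each term is then a composition of the task cost $\mathcal{J}_\theta$ with finitely many applications of $f_\theta$. The contact impulses are given by the softplus/softmax model \cite{leesoft}, which the convergence theorem above already takes to be $L$-smooth, so $f_\theta$ is continuous in $(x,u)$ and the composition is continuous. A finite convex combination of continuous functions is continuous.

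Second, I would establish compactness. The admissible control set $\mathcal{U}$ is bounded and closed by actuator limits, and the horizon $t_h$ is finite, so $\mathcal{U}^{t_h}$ is compact. The set of rollout trajectories is the continuous image of this set and is therefore compact in $\mathcal{T}$. Weierstrass then gives a minimizer $\tau^*$. If one prefers not to assume bounded controls, a fallback is to add a quadratic control penalty to $\mathcal{J}_\theta$ and assume $\mathcal{J}_\theta$ is bounded below. This makes the objective coercive in $u$: sublevel sets are closed and bounded, and the same argument applies.

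The main obstacle is justifying that the feasible set is compact, or that the objective is coercive. The paper never explicitly states bounds on $\mathcal{U}$ or $\mathcal{X}$, and the contact constraint $c_{k+1}\in\mathcal{C}(x_k)$ in Eq.~\ref{eq:task_opt} could in principle be non-closed. I would handle this by assuming, as is standard in MPC, box constraints on the controls. I would also use the smooth contact model so that $c$ is a continuous function of $x$ rather than a set-valued constraint, which keeps the rollout set closed. Finally, I would note that the result does not claim uniqueness, since the objective need not be convex.
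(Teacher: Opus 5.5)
Your proposal is correct and follows the paper's proof: both apply the Weierstrass extreme value theorem, using continuity of $\tilde{\mathcal{L}}$ in $\tau$ and compactness of the feasible set. The paper simply asserts that $\mathcal{T}$ and $\Theta$ are compact and that the objective is continuous, whereas you derive compactness from box-bounded controls over a finite horizon and continuity from the smooth-contact rollout. The control bounds you assume are in fact stated in the implementation table and in the compactness of $\mathcal{U}$ posited in the appendix's Lipschitz-smooth dynamics assumption, so they do not need to be added as a new assumption.
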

            \begin{proof}
                Because $\mathcal{T}$ and $\Theta$ are compact, and $\tilde{\mathcal{L}}_\theta$ is continuous and at least once-differentiable, then $\tau^*$ exists and is a minimizer to $\tilde{\mathcal{L}}_\theta$ by Weierstrass Extreme Value Theorem. 
            \end{proof}
            With the above lemma showing existence of minimizer $\tau^*$, we can now show resulting bounded uncertainty-aware actions with our proposed method.

        \begin{figure*}[t]
            \centering
            \includegraphics[width=\linewidth]{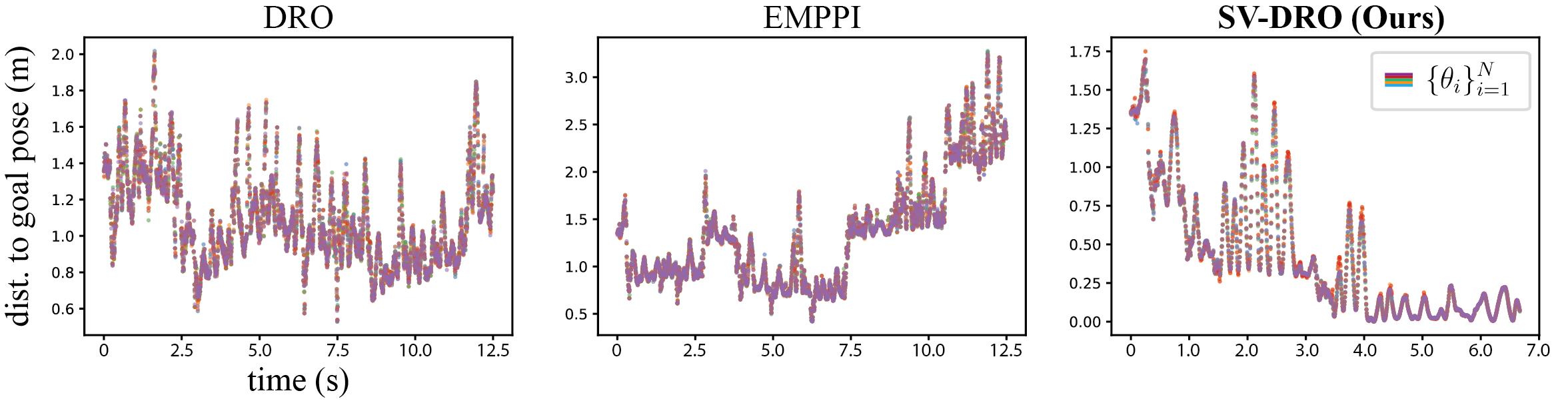}
            \caption{\textbf{Convergence of Parameters towards Task Completion. } We demonstrate a single within-hand dynamic positioning experiment for our approach and two comparative baselines.
            Five samples are drawn from an initial parameter prior uniform distribution and are subsequently evolved via SVGD (our approach) or through random uniform sampling in-the-loop.
            Our approach demonstrates improved convergence of parameters that collectively minimize the distance between the object and the goal pose.
            This outcome is due to the approach's task-aware nature that guides the initial samples towards a parameter posterior that explicitly reasons over task-relevant parameter uncertainty.
            This is in contrast to the other task agnostic approaches, which fail to converge their parameters towards optimizing for the task.}
            \label{fig:convergence}
        \end{figure*}
        \subsection*{Convergence of Optimality Gap}

            \begin{assumption}[Lipschitz-smooth parameterized dynamics]
                \label{assum:lip_smooth}
                Let $\mathcal X\subset\mathbb{R}^{n_x}$, $\mathcal U\subset\mathbb{R}^{n_u}$,
                and $\Theta\subset\mathbb{R}^{n_\theta}$ be compact. For every fixed
                $u\in\mathcal U$, the dynamics
                \begin{equation}
                x_{t+1} = f_\theta(x_t, u_t)
                \end{equation}
                are differentiable in $(\mathcal{X},\Theta)$, and there exist constants
                $L_x, L_\theta > 0$ such that for all $x,x'\in\mathcal X$, $u\in\mathcal U$,
                and $\theta,\theta'\in\Theta$,
                \begin{equation}
                \|f_\theta(x,u) - f_{\theta'}(x',u)\|
                \le L_x \|x - x'\| + L_\theta \|\theta - \theta'\|.
                \end{equation}
                Moreover, the stage cost $\ell(x,u,\theta)$ and terminal cost
                $\ell_T(x,\theta)$ are $L_x$ and $L_\theta$ smooth in $(\mathcal{X},\Theta)$.
            \end{assumption}
            
            \begin{theorem}[Convergence of the optimality gap]
                Let $q \in \mathcal P(\Theta)$ denote the target posterior distribution
                over parameters, and let $\{\mu_M\}_{M\ge 1}$ denote the empirical
                measures induced by Stein particles generated via SVGD. Suppose that
                \begin{equation}
                \mu_M \Rightarrow q \quad \text{as } M \to \infty,
                \end{equation}
                i.e., $\mu_M$ converges weakly to $q$.
                
                Fix an initial condition $x_0 \in \mathcal X$ and a control sequence
                $u_{0:T-1} \in \mathcal U^T$. Let the rollout be defined recursively by
                \begin{equation}
                x_{t+1}(\theta) = f_\theta(x_t(\theta), u_t),
                \end{equation}
                and define the trajectory-dependent Lagrangian
                \begin{equation}
                F(\theta) := L(x_{0:T}(\theta), u_{0:T-1}, \theta).
                \end{equation}
                
                Then $F$ is bounded and continuous on $\Theta$. Consequently, for every
                $\varepsilon > 0$, there exists $M_\varepsilon \in \mathbb Z_+$ such that
                for all $M \ge M_\varepsilon$,
                \begin{equation}
                \left|
                \mathbb E_{\theta \sim q}[F(\theta)]
                -
                \mathbb E_{\theta \sim \mu_M}[F(\theta)]
                \right|
                \le \varepsilon.
                \end{equation}
                
                Furthermore, defining the optimality gap
                \begin{equation}
                \delta L(\theta)
                =
                L(x_{0:T}(\theta), u_{0:T-1}, \theta)
                -
                L(x_{0:T}(\bar\theta), u_{0:T-1}, \bar\theta),
                \end{equation}
                for any fixed $\theta \in \Theta$, we have
                \begin{equation}
                \left|
                \mathbb E_{\theta \sim q}[\delta L(\theta)]
                -
                \mathbb E_{\theta \sim \mu_M}[\delta L(\theta)]
                \right|
                \le \varepsilon
                \end{equation}
                for all sufficiently large $M$.
            \end{theorem}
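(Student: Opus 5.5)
The plan has three steps: prove $F$ is bounded and continuous on $\Theta$, apply weak convergence directly to $F$, and then handle the centering constant in $\delta L$.

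\textbf{Step 1: continuity of the rollout.} With the control sequence $u_{0:T-1}$ held fixed, I argue by induction on $t$ that the map $\theta \mapsto x_t(\theta)$ is Lipschitz. For $t=0$ it is constant, since $x_0$ is fixed. For the inductive step, Assumption~\ref{assum:lip_smooth} gives
\begin{equation*}
\|x_{t+1}(\theta)-x_{t+1}(\theta')\| \le L_x\|x_t(\theta)-x_t(\theta')\| + L_\theta\|\theta-\theta'\|.
\end{equation*}
Unrolling this recursion yields
\begin{equation*}
\|x_t(\theta)-x_t(\theta')\|\le L_\theta\Big(\sum_{s=0}^{t-1}L_x^s\Big)\|\theta-\theta'\|.
\end{equation*}

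\textbf{Step 2: continuity and boundedness of $F$.} The Lagrangian is a finite sum of stage costs $\ell(x_t,u_t,\theta)$ and the terminal cost $\ell_T(x_T,\theta)$; with the rollout enforcing the dynamics, the multiplier term vanishes. Each cost is Lipschitz in $(x,\theta)$ by Assumption~\ref{assum:lip_smooth}. Composing with the Lipschitz maps $\theta\mapsto(x_t(\theta),\theta)$ from Step 1 shows that $F$ is Lipschitz, and hence continuous, on $\Theta$. Because $\Theta$ is compact, the Weierstrass theorem (as used in the Existence of Optima lemma) makes $F$ bounded.

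One technical point: the rollout must stay in $\mathcal X$, so that the constants $L_x,L_\theta$ apply at every step. I take this as implicit in $f_\theta:\mathcal X\times\mathcal U\to\mathcal X$.

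\textbf{Step 3: passing to the limit.} Since $F\in C_b(\Theta)$ and $\mu_M\Rightarrow q$, the definition of weak convergence (or the portmanteau theorem) gives $\int F\,d\mu_M\to\int F\,dq$. The existence of $M_\varepsilon$ is exactly this convergence restated with an $\varepsilon$.

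\textbf{Step 4: the optimality gap.} I write $\delta L(\theta)=F(\theta)-F(\bar\theta)$. The reference $\bar\theta$ must be treated as a fixed point of $\Theta$, as the statement's phrase ``for any fixed'' indicates. Then $F(\bar\theta)$ is a constant, and because both $q$ and $\mu_M$ are probability measures, it cancels from the difference of expectations:
\begin{equation*}
\mathbb E_q[\delta L]-\mathbb E_{\mu_M}[\delta L]=\mathbb E_q[F]-\mathbb E_{\mu_M}[F].
\end{equation*}
The bound from Step 3 then applies unchanged.

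\textbf{Main obstacle.} The hard case is when $\bar\theta=\bar\theta_M$ is the particle mean, which depends on $M$; here the cancellation in Step 4 is not available. My approach would be to use compactness of $\Theta$: the map $\theta\mapsto\theta$ is bounded and continuous on $\Theta$, so weak convergence also gives $\bar\theta_M\to\mathbb E_q[\theta]$. By the Lipschitz bound from Step 2, $F(\bar\theta_M)$ then converges to $F$ evaluated at that limit. This yields the result with the centering taken at the $q$-mean, at the cost of replacing $\varepsilon$ by $2\varepsilon$, which is harmless. Throughout, I would take weak convergence itself as given by the hypothesis rather than try to derive it from the SVGD Convergence theorem.
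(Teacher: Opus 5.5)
Your proof is correct and follows the paper's argument essentially step for step: you bound the rollout's Lipschitz constant by unrolling the recursion, get continuity and boundedness of $F$ on the compact set $\Theta$, apply the portmanteau theorem to obtain $\int F\,d\mu_M\to\int F\,dq$, and cancel the constant $F(\bar\theta)$ between two probability measures. One small correction to your ``main obstacle'': if $\bar\theta=\bar\theta_M$ is the center in both expectations, $F(\bar\theta_M)$ is still a constant for each fixed $M$ and cancels exactly as in your Step 4, so your extra argument is needed only for the different variant where the $q$-side is centered at $\mathbb{E}_q[\theta]$ (that argument is sound, with $\lim_M\bar\theta_M\in\Theta$ guaranteed because $\Theta$ is closed).
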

            
            \begin{proof}
                We first show that the rollout map $\theta \mapsto x_t(\theta)$ is
                Lipschitz continuous. For $t=0$, $x_0$ is fixed and independent of
                $\theta$. For $t \ge 0$, using Assumption 1,
                \begin{equation}
                \begin{aligned}
                \|x_{t+1}(\theta) - x_{t+1}(\theta')\|
                &=
                \|f_\theta(x_t(\theta), u_t)
                - f_{\theta'}(x_t(\theta'), u_t)\| \\
                &\le
                L_x \|x_t(\theta) - x_t(\theta')\|
                +
                L_\theta \|\theta - \theta'\|.
                \end{aligned}
                \end{equation}
                
                Applying this recursively yields
                \begin{equation}
                \|x_t(\theta) - x_t(\theta')\|
                \le
                L_\theta \sum_{s=0}^{t-1} L_x^s \|\theta - \theta'\|.
                \end{equation}
                Since $T$ is finite and $\Theta$ is compact, this implies that
                $\theta \mapsto x_{0:T}(\theta)$ is Lipschitz continuous.
                
                By Assumption \ref{assum:lip_smooth}, the cost terms are continuous and Lipschitz in their
                arguments. Therefore, 
                \begin{equation}
                F(\theta) = L(x_{0:T}(\theta), u_{0:T-1}, \theta)
                \end{equation}
                is continuous in $\theta$. Since $\Theta$ is compact, $F$ is bounded.
                
                By weak convergence $\mu_M \Rightarrow q$, and since $F$ is bounded and
                continuous, Portmanteau theorem \cite{billing} implies
                \begin{equation}
                \int_\Theta F(\theta)\, d\mu_M(\theta)
                \to
                \int_\Theta F(\theta)\, dq(\theta).
                \end{equation}
                Thus, for every $\varepsilon > 0$, there exists $M_\varepsilon$ such that
                for all $M \ge M_\varepsilon$,
                \begin{equation}
                \left|
                \mathbb E_{\theta \sim q}[F(\theta)]
                -
                \mathbb E_{\theta \sim \mu_M}[F(\theta)]
                \right|
                \le \varepsilon.
                \end{equation}
                
                Finally, note that
                \begin{equation}
                \delta L(\theta) = F(\theta) - F(\bar\theta),
                \end{equation}
                where $F(\bar\theta)$ is constant in $\theta$. Therefore,
                \begin{equation}
                \mathbb E_q[\delta L] - \mathbb E_{\mu_M}[\delta L]
                =
                \mathbb E_q[F] - \mathbb E_{\mu_M}[F],
                \end{equation}
                and the same convergence bound applies.
            \end{proof}

    \section{Implementation Details}\label{app:implementation}
    
    
        \begin{table*}[t]
            \centering
            \small
            \setlength{\tabcolsep}{12pt}
            \renewcommand{\arraystretch}{1.1}
            \begin{tabularx}{\textwidth}{@{} l >{\centering\arraybackslash}X >{\centering\arraybackslash}X @{}}
                \toprule
                \textbf{Term} & \textbf{Bimanual Push-T} & \textbf{Within-Hand Dynamic Positioning} \\
                \midrule
                \textbf{robot pose} &
                $[
                \underbrace{(x_\textrm{ee1}, y_\textrm{ee1})}_{\textrm{end effector 1}},
                \underbrace{(x_\textrm{ee2}, y_\textrm{ee2})}_{\textrm{end effector 2}},
                \underbrace{(x_\textrm{block}, y_\textrm{block}, \theta_\textrm{block})}_{\textrm{T block}}
                ]^\top$ &
                $[
                \underbrace{(x_\textrm{tray}, y_\textrm{tray}, \theta_\textrm{tray})}_{\textrm{tray}},
                \underbrace{(x_\textrm{object}, y_\textrm{object}, \theta_\textrm{object})}_{\textrm{object}}
                ]^\top$ \\
                \textbf{control (torque)} &
                $[(u_{x,\textrm{ee1}}, u_{y,\textrm{ee1}}), (u_{x,\textrm{ee2}}, u_{y,\textrm{ee2}})]^\top$ &
                $[u_{x,\textrm{tray}}, u_{y,\textrm{tray}}, u_{\theta,\textrm{tray}}]^\top$ \\
                \textbf{proprioceptive sensor noise} & 0.001 m & 0.01 m \\
                \midrule
                \multirow{4}{*}{\textbf{uniform sampling bounds of params.}} &
                mass: $( -1\textrm{e-}4,\,10.0)\,\textrm{kg}$ &
                mass: $( 5\textrm{e-}2,\,1.0)\,\textrm{kg}$ \\
                \addlinespace[2pt]
                &
                inertia: $( -1\textrm{e-}4,\,10.0)\,\textrm{kg m}^2$ &
                inertia: $( -1\textrm{e-}4,\,1.0)\,\textrm{kg m}^2$ \\
                \addlinespace[2pt]
                &
                --- &
                $x_\textrm{CoM}$: $( -1.2\textrm{e-}2,\,1.2\textrm{e-}2)\,\textrm{m} (\star)$ \\
                \addlinespace[2pt]
                &
                --- &
                $y_\textrm{CoM}$: $( -2.5\textrm{e-}2,\,2.5\textrm{e-}2)\,\textrm{m} (\star)$ \\
                \midrule
                \textbf{planning horizon} & 0.375 seconds & 0.375 seconds \\
                \textbf{total time} & 10 seconds & 12.5 seconds \\
                \textbf{spatial bounds} & -0.75 m $\times$ 0.75 m & -0.125 m $\times$ 0.125 m \\
                \textbf{control (torque) bounds [$\textrm{u}_\textrm{min}, \textrm{u}_\textrm{max}$]} & [(-1.0, -1.0), (1.0, 1.0)] N & [(-5.0, -0.2, -0.5), (5.0, 0.2, 0.5)] N \\
                \textbf{object dimensions} & $(T_\textrm{height},T_\textrm{width},T_\textrm{thickness}) = (0.5,0.6,0.2)$ m & $(\textrm{radius}_\textrm{obj.},\textrm{height}_\textrm{obj.})=(0.05,0.25)$ m \\
                \midrule
                \multirow{4}{*}{\textbf{MPC tuning weights}} 
                & diag($\mathbf{Q}$) = [5.0, 5.0, 2.0, 0.0, 0.0, 0.0] & diag($\mathbf{Q}$) = [15.0, 0.0, 5.0, 0.01, 0.01, 0.1, 0.0, 0.0, 5.0, 0.01, 0.01, 0.1]\\
                \addlinespace[2pt]
                & diag($\mathbf{R}$) = [0.0001, 0.0001] & diag($\mathbf{R}$) = [15.0, 20.0, 20.0] \\
                \addlinespace[2pt]
                & diag($\mathbf{N}$) = [0.1, 0.1] & diag($\mathbf{N}$) = [0.001, 0.001] \\
                \addlinespace[2pt]
                & diag($\mathbf{Q_f}$) = [5.0, 5.0, 2.0, 0.5, 0.5, 0.5] & diag($\mathbf{Q_f}$) = [15.0, 0.0, 5.0, 0.01, 0.01, 0.1, 0.0, 0.0, 5.0, 0.01, 0.01, 0.1] \\
                \midrule
                \textbf{$\theta$ Particle Count} & 5 & 5 \\
                \textbf{SVGD Step Size} & 0.0001 & 0.0001 \\
                \textbf{RBF Kernel bandwidth $h$} & 0.75 & 0.75 \\
                \midrule
                \textbf{num. control steps per planning cycle $H$} & 10 & 10 \\
                \bottomrule
            \end{tabularx}
            \caption{\textbf{Experiment Parameters.} Here we outline the experiment details for both the Bimanual push-T and the Within-Hand Dynamic Positioning experiments. We outline the state-control space of each experiment as well as uncertain model parameters. Each parameter of interest is sampled from a uniform distribution with indicated bounds. $\star$ indicates that the boundaries for the sample space is with respect to the object body coordinates, not world coordinates.}
            \label{tab:cost_values}
        \end{table*}
    
        \subsection*{Control Formulation}\label{app:control_formulation}
    
            Here, we write the optimal control formulation for the problem settings discussed in the paper.
            Let $\tau = \{(x_k,u_k,c_k) \forall k\in [t,t+t_h]\}$ be the trajectory for time $t \in [0,T]$, where $x \in \mathcal{X}=\mathbb{R}^{n_e}$ as the system state and $u \in \mathcal{U}=\mathbb{R}^{n_u}$ as the control input, and $c \in \mathcal{C}_\theta$ as the contact impulses.
            The task of finding an optimal control sequence $u^*$ is the solution to the following optimization for each planning cycle,
            \begin{equation}\label{eq:opt_formulation}
                \begin{split}
                    &\min_{\tau=\{(x_k,u_k,c_k) \forall k\in [t,t+t_h]\}} m(x_{t_h},u_{t_h},c_{t_h}) + \sum_{k=t}^{t+t_h} \ell(x_k,u_k,c_k) \\
                    \textrm{s.t.} &
                    \begin{cases}
                        x_0 \in \mathcal{X} & \textrm{(init. state)} \\
                        x_{k+1} = f_\theta(x_k,u_k,c_k) & \textrm{(dynamical model)}
                    \end{cases}
                \end{split}
            \end{equation}
            where the terminal $m(\cdot)$ and running cost $\ell(\cdot)$ are written as,
            \begin{equation}
                \begin{split}
                    \ell(x_k,u_k,c_k) &= (x_{k,\textrm{obj.}} - x^*_\textrm{obj.})^\top \mathbf{Q} (x_{k,\textrm{obj.}} - x^*_\textrm{obj.}) \\
                    &+ u_k^\top \mathbf{R} u_k + c_k^\top \mathbf{N} c_k \\
                    m(x_k,u_k,c_k) &= (x_{t_h,\textrm{obj.}} - x^*_\textrm{obj.})^\top \mathbf{Q_f} (x_{t_h,\textrm{obj.}} - x^*_\textrm{obj.})
                \end{split}
            \end{equation}
            where `obj.' is the object of interest to optimize (e.g. T-block), and $\mathbf{Q}, \mathbf{Q_f} \in \mathbb{R}^{n_e \times n_e}$, $\mathbf{R} \in \mathbb{R}^{n_u \times n_u}$, and $\mathbf{N} \in \mathbb{R}^{n_c \times n_c}$ are weight matrices along spatial, controllable, and contact dimensions, respectively.
            Additionally, the bimanual push-T has an additional cost function to penalize the end-effector to block signed distance function, $\phi^n$ defined as,
            \begin{equation}
                \ell_\phi (x_k,u_k) = {\phi_k^n}^\top \phi_k^n.
            \end{equation}

        \subsection*{Experiment Details}\label{app:experiments}
    
            This section outlines the experimental setup for the bimanual push-T and the within-hand dynamic positioning problem.
            Additional experiment parameters are detailed in Table \ref{tab:cost_values}.
            Note that the above table indicates experiment details for the simulation experiments.
            For the hardware implementation of the within-hand dynamic waiter, proprioception of the object and tray states are measured using April Tags \cite{5979561}.
            The dimensions of the unknown object are roughly $5$ cm in diameter and $11$ cm in height.
    
        \subsection*{Convergence of Inferred Parameters }\label{app:convergence}
    
            Here, we empirically show that the Stein-inferred parameters converge towards near optimal task completion for a single within-hand dynamic positioning experiment. 
            Results are shown in Fig. \ref{fig:convergence}.
            Five parameter samples are initialized from a uniform prior distribution, and the samples are evolved online via SVGD for our approach, or are randomly sampled from the prior for the EMPPI and DRO approaches.
            Our SV-DRO approach not only demonstrates improved performance over time, but we also observe that the initial samples are later evolved and collapse towards values that minimize the distance of the object to the goal state. 
            Our approach benefits from such outcomes due to the evolved samples being computed by a task-aware posterior distribution that explicitly considers parameter uncertainty with respect to the manipulation task.
            In contrast, the baseline approaches not only fail to achieve comparative optimal performance due to their higher distance to goal, but also fail to converge their random samples toward minimizing the task objective due to their stochastic, task-agnostic nature.

        \begin{table}[t]
            \centering
            \label{tab:compute_parity}
            \begin{tabular}{lc}
                \toprule
                Method & Wall-Clock / Step (s) \\
                \midrule
                SV-DRO (Ours) & $0.130 \pm 0.008$ \\
                DuST-MPC & $0.151 \pm 0.009$ \\
                EMPPI & $0.087 \pm 0.007$ \\
                MPC & $0.068 \pm 0.011$ \\
                DRO & $0.123 \pm 0.008$ \\
                \bottomrule
            \end{tabular}
            \caption{Compute parity across baselines for a single MPC loop. All methods are evaluated under the same compute setting with an MPC planning horizon of $0.375$ s on an NVIDIA GeForce RTX 3080 GPU, and particle-based methods use $N=5$ parameter particles. SV-DRO has a wall-clock time of $0.130 \pm 0.008$ s per step, which is comparable to DRO and remains below the MPC planning horizon despite the additional SVGD posterior update.}
            \label{tab:comp_scaling}
        \end{table}
        \subsection*{Additional Computational Results}\label{app:compute}

            We report additional compute-parity results across all baselines to clarify that the performance gains in Table~\ref{tab:experiment_results} are not obtained from substantially larger computation budgets. 
            All methods are evaluated on the same hardware configuration using an NVIDIA GeForce RTX 3080 GPU, with an MPC planning horizon of $0.375$ s and $N=5$ parameter particles for particle-based methods. 
            
            These results are consistent with the computational scaling analysis in Table~\ref{tab:comp_scaling}, where the proposed SV-DRO gradient step benefits from GPU parallelization over parameter particles and planning horizon. Importantly, instead of static prior sampling or purely estimation-driven parameter updates, SV-DRO transports particles toward task-sensitive regions of the optimality-gap posterior and embeds them in the robust MPC objective. This supports the central claim that task-aware particle evolution improves robustness under limited feedback while maintaining practical online computation. 
\end{document}